\documentclass{article}

\usepackage[utf8]{inputenc}
\usepackage[T1]{fontenc}
\usepackage[margin=1in]{geometry}
\usepackage{amsmath,amssymb,amsthm}
\usepackage{graphicx}
\usepackage[colorlinks=true,allcolors=blue]{hyperref}
\usepackage{booktabs}
\usepackage{mathtools}
\usepackage{tikz}
\usetikzlibrary{arrows.meta,positioning}

\newtheorem{theorem}{Theorem}[section]
\newtheorem{lemma}[theorem]{Lemma}
\newtheorem{proposition}[theorem]{Proposition}
\newtheorem{corollary}[theorem]{Corollary}
\newtheorem{definition}{Definition}[section]
\newtheorem{remark}{Remark}[section]
\newtheorem{assumption}{Assumption}[section]

\DeclareMathOperator{\Tr}{Tr}

\title{Persistent Gaussian Perturbations Prevent Oversmoothing in Recurrent Graph Neural Networks}

\author{Mostafa Haghir Chehreghani\\
	Department of Computer Engineering\\
	Amirkabir University of Technology (Tehran Polytechnic)\\
	Tehran, Iran\\
	\texttt{mostafa.chehreghani@aut.ac.ir}}

\date{}

\begin{document}
	
	\maketitle
	
\begin{abstract}
	
	Oversmoothing is a fundamental limitation of deep graph neural networks
	(GNNs), where repeated message passing causes node representations to
	become increasingly similar, eventually collapsing toward a
	low-dimensional subspace. This phenomenon limits the effective depth of
	message-passing architectures and motivates the search for mechanisms
	that preserve representation diversity.
	In this paper, we study a recurrent graph neural network in which
	independent Gaussian noise is injected after every propagation step and
	analyze the resulting architecture as a stochastic dynamical system.
	Under a standard global contraction assumption on the deterministic
	update, we prove that the hidden representations form a geometrically
	ergodic Markov chain admitting a unique invariant probability measure.
	Our main theoretical result establishes an explicit positive lower bound
	on the expected stationary Dirichlet energy, proportional to both the
	noise variance and the spectral gap of the underlying graph.
	Consequently, the stationary representations cannot collapse onto the
	constant manifold, providing a rigorous guarantee that asymptotic
	oversmoothing is prevented in the sense of non-vanishing Dirichlet
	energy.
	
	Our analysis reveals persistent stochastic perturbations as a
	fundamentally different mechanism for combating oversmoothing, complementing
	existing deterministic approaches based on residual connections,
	normalization, and graph rewiring. Finally, numerical experiments on
	both linear and nonlinear recurrent graph neural networks closely match
	the theoretical predictions, illustrating the emergence of a stationary
	distribution and the predicted dependence of the limiting Dirichlet
	energy on the noise intensity.
	
\end{abstract}
	
	\textbf{Keywords:}
	Graph Neural Networks; Oversmoothing; Stochastic Dynamical Systems;
	Dirichlet Energy; Geometric Ergodicity; Spectral Graph Theory.
	
\section{Introduction}

Graph neural networks (GNNs) learn node representations through repeated
applications of local message-passing operators.
As the number of layers increases, information propagates across
progressively larger neighborhoods, enabling the network to model
long-range interactions.
However, repeated message passing also gives rise to
\emph{oversmoothing}: node representations become increasingly similar,
eventually collapsing onto a low-dimensional subspace or, in many
architectures, onto the space of constant vectors
\cite{li2018deeper,oono2020graph}.
This phenomenon limits the effective depth of GNNs and has become one of
the central theoretical challenges in graph representation learning.
Understanding whether oversmoothing is an unavoidable consequence of
deep propagation, or whether it can be prevented by principled design,
remains an active area of research.

Existing approaches alleviate oversmoothing through architectural
modifications such as residual connections, normalization,
graph rewiring, diffusion-based propagation, or stochastic
regularization.
These methods often improve empirical performance and enable
substantially deeper models, yet their guarantees are typically tied to
specific architectures or finite-depth settings.
Only recently have rigorous results established conditions under which
certain deterministic architectures provably avoid oversmoothing
\cite{scholkemper2025}.
This naturally raises a complementary theoretical question:

\begin{quote}
	Can persistent stochastic perturbations fundamentally change the
	asymptotic dynamics of recurrent graph neural networks so that
	representation collapse becomes impossible?
\end{quote}

In this paper we answer this question affirmatively.
We study a recurrent graph neural network in which independent Gaussian
perturbations are injected after every message-passing step.
Rather than viewing noise merely as a regularization device during
training, we interpret the resulting architecture as a stochastic
dynamical system evolving on the representation space.
This perspective makes it possible to analyze its long-term behavior
using tools from Markov chain theory, random dynamical systems,
optimal transport, and spectral graph theory.

Our main theorem shows that additive Gaussian noise fundamentally changes
the asymptotic behavior of recurrent graph propagation.
Under a standard global contraction assumption on the deterministic
update, the hidden representations converge in distribution toward a
unique invariant probability measure.
Moreover, the expected stationary Dirichlet energy satisfies an explicit
lower bound proportional to
\[
\sigma^2\lambda_2,
\]
where $\sigma^2$ denotes the noise variance and $\lambda_2$ is the
spectral gap of the graph Laplacian.
Consequently, the stationary representations cannot collapse onto the
constant manifold, implying that oversmoothing, interpreted as
asymptotic vanishing of Dirichlet energy, is impossible under the
assumptions of the theorem.

Our analysis is intentionally geometric rather than statistical.
We do not study optimization, convergence of training algorithms,
or predictive performance on downstream tasks.
Instead, we identify a fundamental mechanism by which persistent
stochastic perturbations alter the limiting dynamics of message passing
and prevent degeneration of node representations.
The resulting theory provides a mathematically explicit relationship
between the injected noise, the graph spectrum, and the asymptotic
representation geometry.

\paragraph{Why this theorem matters for practice.}
Although our analysis is theoretical, it has direct implications for the
design of deep and recurrent graph neural networks.
Modern GNN architectures increasingly rely on recurrent propagation,
implicit layers, equilibrium models, or continuous-time dynamics, all of
which involve repeated application of graph operators and are therefore
susceptible to oversmoothing.
Our results demonstrate that persistent stochastic perturbations are not
merely a heuristic regularization technique but fundamentally alter the
limiting behavior of the dynamics by maintaining a strictly positive
level of representation diversity.
The explicit dependence of the stationary Dirichlet energy on both the
noise variance and the graph spectral gap also provides theoretical
guidance for selecting noise levels in practical architectures.
More broadly, the analysis suggests that stochasticity can serve as a
principled mechanism for controlling long-term representation collapse
rather than simply improving optimization or generalization.

The contributions of this paper are as follows.

\begin{enumerate}
	
	\item
	We formulate noisy recurrent graph neural networks as stochastic
	dynamical systems and analyze their asymptotic behavior.
	
	\item
	Under a global contraction assumption on the deterministic dynamics, we
	prove existence and uniqueness of an invariant probability measure,
	together with geometric ergodicity of the resulting Markov process.
	
	\item
	We derive an explicit lower bound on the stationary Dirichlet energy,
	showing that it depends quantitatively on both the graph spectral gap
	and the injected noise variance.
	
	\item
	We prove that asymptotic oversmoothing, characterized by vanishing
	Dirichlet energy, cannot occur under the assumptions of the theorem.
	
	\item
	We validate the theoretical predictions through three numerical
	experiments, including explicit linear dynamics, nonlinear recurrent
	graph neural networks, and empirical verification of the predicted
	quadratic scaling of stationary Dirichlet energy with the noise
	variance.
	
\end{enumerate}

The remainder of the paper is organized as follows.
Section~\ref{sec:relatedwork} reviews the most relevant literature on oversmoothing,
stochastic graph neural networks, dynamical-system formulations of GNNs,
and random dynamical systems.
Section~\ref{sec:mathematical_setting} introduces the mathematical framework, notation, and the
problem formulation adopted throughout the paper.
Section~\ref{sec:noisy_dynamics} presents the noisy recurrent graph neural network together with
its stochastic dynamical-system interpretation.
Section~\ref{sec:Asymptotic_Analysis} develops the theoretical analysis, proving the existence and
uniqueness of the stationary distribution, geometric ergodicity, and the
positive lower bound on the stationary Dirichlet energy.
Section~\ref{sec:Numerical_Experiments} provides numerical experiments that validate the theoretical
predictions and illustrate the behavior of noisy recurrent GNNs.
Finally, Sections~\ref{sec:Discussion} and~\ref{sec:Conclusion} discuss the implications and limitations of the
results and conclude the paper.

\section{Related Work}
\label{sec:relatedwork}

Graph neural networks (GNNs) have become one of the dominant paradigms
for learning from graph-structured data, with successful applications
in node classification, graph classification, recommendation,
molecular modeling, and scientific computing
\cite{kipf2017,hamilton2017,velivckovic2018,xu2019gin,HaghirChehreghani2022Nature}.
Their rapid development has also revealed several fundamental
theoretical limitations, among which oversmoothing and oversquashing
have emerged as two of the most significant obstacles to building deep
and expressive graph neural networks.

\paragraph{Oversmoothing in graph neural networks.}

Oversmoothing is one of the principal limitations preventing graph neural
networks (GNNs) from scaling to large propagation depths.
Li et al.~\cite{li2018deeper} first interpreted graph convolution as repeated
Laplacian smoothing, explaining why node representations gradually become
indistinguishable.
Oono and Suzuki~\cite{oono2020graph} later proved that deep GNNs
exponentially lose expressive power under broad assumptions.
Subsequent theoretical and empirical studies further characterized
oversmoothing and its relationship with graph topology
\cite{nt2020revisiting,elinas2023revisiting}.

Many architectural modifications have been proposed to alleviate this
phenomenon.
Representative examples include PairNorm
\cite{zhao2020pairnorm},
DropEdge
\cite{rong2020dropedge},
Jumping Knowledge Networks
\cite{xu2018jknet},
APPNP
\cite{klicpera2019predict},
GCNII
\cite{chen2020gcnii},
GraphNorm
\cite{cai2021graphnorm},
DeeperGCN
\cite{li2020deepergcn},
and residual normalization techniques.
Most recently,
Scholkemper et al.~\cite{scholkemper2025}
proved that residual connections together with normalization can
provably prevent oversmoothing under suitable assumptions.
Hoseinnia \emph{et al.}~\cite{hoseinnia2025}
proposed Adaptive Early Embedding together with a biased DropEdge
mechanism to mitigate oversmoothing during node classification.
Zohrabi \emph{et al.}~\cite{zohrabi2024}
extended graph neighborhoods using centrality- and similarity-based
criteria to improve information propagation while reducing excessive
feature homogenization.
Unlike these architectural approaches,
our work leaves the propagation operator unchanged and instead studies the
effect of persistent stochastic perturbations.
We show that additive Gaussian noise alone guarantees a positive asymptotic
Dirichlet energy independently of network depth.

\paragraph{Oversquashing and graph geometry.}

Oversquashing is a complementary phenomenon in which exponentially many
long-range messages are compressed into fixed-dimensional node
representations, leading to severe information bottlenecks
\cite{alon2021bottleneck,topping2022understanding,digiovanni2023oversquashing}.
Recent work has proposed architectural modifications, graph rewiring, and
similarity-aware neighborhood augmentation to alleviate these bottlenecks.
For example, Mohamadi and Haghir Chehreghani
\cite{mohamadi2025oversquashing} mitigate over-squashing in graph few-shot
learning by combining local and global similarity information to enrich
message propagation.
Although over-squashing and over-smoothing are distinct phenomena, both
highlight the importance of preserving informative representations in deep
message-passing networks.

\paragraph{Stochastic graph neural networks.}

Randomness has been incorporated into graph neural networks in many forms,
including stochastic diffusion,
feature perturbation,
edge sampling,
and data augmentation.
GRAND
\cite{chamberlain2021grand}
models message passing through stochastic differential equations,
whereas Random Features
\cite{sato2021random},
DropEdge
\cite{rong2020dropedge},
and stochastic regularization methods
\cite{godwin2022simple}
improve generalization by injecting randomness during training.
Neural SDE formulations of graph learning
\cite{zhuang2020sdgnn}
further establish continuous-time stochastic message passing.
Several recent works have also enriched graph representations by
incorporating additional semantic information.
For example, Content Augmented Graph Neural Networks
(CAGNN) \cite{nasrabadi2025}
integrate content-based information into graph message passing to
improve representation quality.
These methods primarily pursue improved prediction accuracy.
In contrast,
our objective is to characterize the long-time stochastic dynamics of
recurrent propagation and to derive explicit depth-independent lower
bounds on representation diversity.

\paragraph{Continuous-time, equilibrium, and dynamical-system models.}

Continuous-depth formulations provide another perspective on deep graph
learning.
Graph Neural ODEs
\cite{poli2019graphode}
and Neural ODE based architectures
\cite{chen2018neuralode}
replace discrete layers by continuous dynamical systems.
Deep Equilibrium Models
\cite{bai2019deep}
and Implicit Graph Neural Networks
\cite{gu2020implicit}
replace finite-depth propagation by fixed-point equations.
Graph-Coupled Oscillator Networks (GraphCON)
\cite{rusch2023graphcon}
introduce second-order oscillatory dynamics that empirically stabilize
deep propagation and alleviate oversmoothing.
Our framework differs fundamentally from these approaches.
Rather than modifying the underlying dynamics,
we analyze stochastic contractive systems and prove that additive Gaussian
noise generates a unique invariant probability measure with non-vanishing
stationary Dirichlet energy.

\paragraph{Graph Transformers and modern deep graph architectures.}

Recent graph transformer architectures extend message passing through global
attention mechanisms.
Graphormer
\cite{ying2021graphormer}
and GraphGPS
\cite{rampasek2022graphgps}
combine transformer layers with structural positional encodings and local
message passing, substantially improving long-range representation learning.
Although these architectures mitigate several limitations of classical
message-passing GNNs, they do not provide theoretical guarantees preventing
asymptotic feature collapse.
Our results are complementary and may also be applicable to recurrent
propagation modules embedded within transformer-based graph architectures.

\paragraph{Spectral graph theory.}

Our analysis is fundamentally spectral.
The Dirichlet energy is expressed through the graph Laplacian,
connecting our results with classical spectral graph theory
\cite{chung1997}
and graph signal processing
\cite{shuman2013}.
Modern spectral analyses of graph neural networks
\cite{balcilar2021analyzing}
have emphasized the frequency response of message-passing operators.
Our lower bounds provide a probabilistic interpretation of spectral
energy preservation under stochastic propagation.

\paragraph{Random dynamical systems and invariant measures.}

The theoretical analysis builds upon the theory of random dynamical systems,
iterated random functions,
Markov chains,
and optimal transport.
Classical existence and uniqueness results for invariant measures
\cite{diaconis1999iterated,arnold1998,meyn2009}
combine naturally with Wasserstein contraction techniques
\cite{villani2003,villani2009,hairer2011}
to characterize the long-term behaviour of noisy graph propagation.
These probabilistic tools enable explicit lower bounds on stationary
Dirichlet energy.

\paragraph{Positioning of the present work.}

To the best of our knowledge,
this paper is the first to establish that persistent additive Gaussian
perturbations alone provably prevent asymptotic oversmoothing in recurrent
graph neural networks.
Under a global contraction assumption,
the noisy dynamics admit a unique invariant probability measure with
strictly positive stationary variance.
Combining this stochastic characterization with spectral estimates of the
graph Laplacian yields explicit positive lower bounds on the limiting
Dirichlet energy that hold independently of propagation depth.

\section{Mathematical Setting}
\label{sec:mathematical_setting}

The purpose of this section is to introduce the graph-theoretic notation,
the representation space, and the notion of oversmoothing used throughout
the paper.  No assumptions on the dynamics are imposed here; these will be
introduced in Section~\ref{sec:noisy_dynamics}.
Throughout the paper, all vector spaces are equipped with the Euclidean norm
$\|\cdot\|$, and all matrix spaces with the Frobenius norm
$\|\cdot\|_{F}$.
Expectation is taken with respect to the underlying probability space
$(\Omega,\mathcal{F},\mathbb{P})$ on which all random variables are
defined.

\subsection{Graphs and node representations}

Let $G=(V,E)$
be a finite, connected, undirected graph with
$|V|=N$ and $|E|=m$. The adjacency matrix is denoted by
$A\in\mathbb{R}^{N\times N}$,
and the degree matrix by
\[
D=\operatorname{diag}(d_1,\ldots,d_N).
\]
The (unnormalized) graph Laplacian is
$
L=D-A$. 
Since the graph is connected,
\[
0=\lambda_1(L)
<
\lambda_2(L)
\le
\cdots
\le
\lambda_N(L),
\]
where $\lambda_2(L)$ denotes the algebraic connectivity (spectral gap).

Each node carries a feature vector in $\mathbb{R}^{d}$.
Collecting the node features row-wise yields the representation matrix

\[
H=
\begin{bmatrix}
	h_1^{\top}\\
	\vdots\\
	h_N^{\top}
\end{bmatrix}
\in
\mathbb{R}^{N\times d}.
\]
The initial representation is denoted by
$
H^{(0)}=X$,
where $X$ is the input feature matrix.

\subsection{The zero-mean representation}

Oversmoothing concerns the collapse of node representations toward the
subspace of constant vectors.
It is therefore convenient to decompose every representation into its
constant and non-constant components.
Let
\[
P
=
I_N
-
\frac1N
\mathbf1\mathbf1^{\top}
\]
be the orthogonal projection onto the subspace orthogonal to
$\operatorname{span}\{\mathbf1\}$.
For every representation matrix $H$, define
$
\bar H
=
PH$.
Equivalently,

\[
\bar H
=
H
-
\frac1N
\mathbf1\mathbf1^{\top}H.
\]

Thus,

\[
H
=
\frac1N
\mathbf1\mathbf1^{\top}H
+
\bar H,
\]
where the first term is constant across all nodes and
$\bar H$ contains the node-dependent component.
The quantity
$
\|\bar H\|_{F}^{2}
$
measures the total squared deviation of the node representations from their mean.

\subsection{Projected Gaussian noise}

The orthogonal projection introduced above also determines the statistical
properties of the projected noise that appears throughout the subsequent
analysis.

\begin{lemma}[Projected Gaussian noise]
	\label{lem:projected_noise}
	
	Let
	\[
	P = I_N - \frac1N \mathbf1\mathbf1^\top,
	\]
	and let
	\[
	\Xi \in \mathbb R^{N\times d}
	\]
	be a random matrix whose entries are independent standard Gaussian random variables,
	\[
	\Xi_{ij} \stackrel{\mathrm{i.i.d.}}{\sim} \mathcal N(0,1).
	\]
	Define the projected noise $\bar\Xi = P\Xi$.
	Then the following properties hold.
	\begin{enumerate}
		\item The projected noise is centered, i.e., $\mathbb E[\bar\Xi] = 0$.
		\item Each feature column has covariance:
		\[
		\operatorname{Cov}(\bar\Xi_{:,k}) = P, \qquad k=1,\dots,d.
		\]
		\item The expected squared Frobenius norm satisfies:
		\[
		\boxed{\mathbb E\|\bar\Xi\|_F^2 = (N-1)d.}
		\]
	\end{enumerate}
\end{lemma}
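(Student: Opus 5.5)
The plan is to work column by column and use only linearity of expectation together with the algebraic properties of the projection $P$. Write $\Xi_{:,k}$ for the $k$-th column of $\Xi$. Since the entries are i.i.d.\ standard Gaussians, each column satisfies $\mathbb E[\Xi_{:,k}]=0$ and $\operatorname{Cov}(\Xi_{:,k})=I_N$. Because $\bar\Xi=P\Xi$ acts column-wise, we have $\bar\Xi_{:,k}=P\Xi_{:,k}$ for each $k$.

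For item~1, I will use linearity of expectation: $\mathbb E[\bar\Xi]=P\,\mathbb E[\Xi]=0$. For item~2, I will apply the standard transformation rule for covariances under a linear map, $\operatorname{Cov}(P\Xi_{:,k})=P\,\operatorname{Cov}(\Xi_{:,k})\,P^\top=PP^\top$. I then verify that $P$ is symmetric and idempotent. Symmetry is immediate from $\mathbf1\mathbf1^\top$ being symmetric. For idempotence, $(\mathbf1\mathbf1^\top)^2=N\,\mathbf1\mathbf1^\top$, so
\[
P^2=I_N-\tfrac2N\mathbf1\mathbf1^\top+\tfrac1{N^2}N\,\mathbf1\mathbf1^\top=P .
\]
Hence $PP^\top=P^2=P$.

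For item~3, I decompose the Frobenius norm over columns, $\|\bar\Xi\|_F^2=\sum_{k=1}^d\|\bar\Xi_{:,k}\|^2$. For a centered random vector $Y$, $\mathbb E\|Y\|^2=\Tr\operatorname{Cov}(Y)$, so by item~2 each column contributes $\Tr P=N-\tfrac1N\Tr(\mathbf1\mathbf1^\top)=N-1$. Summing over the $d$ columns gives $(N-1)d$. As a cross-check, $\Tr P$ equals the rank of the orthogonal projection onto $\mathbf1^\perp$, which is $N-1$.

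There is no genuine obstacle here. The only step needing care is justifying $\mathbb E\|Y\|^2=\Tr\operatorname{Cov}(Y)$ for a centered $Y$, which follows from $\mathbb E\|Y\|^2=\mathbb E\Tr(YY^\top)=\Tr\,\mathbb E[YY^\top]$. Independence across columns is not even needed for item~3, only for describing the joint law.
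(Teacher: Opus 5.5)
Your proposal is correct and follows essentially the same route as the paper: linearity of expectation for the mean, the covariance rule $P I_N P^\top = P$ via symmetry and idempotence of $P$, and summing $\Tr P = N-1$ over the $d$ columns. The only differences are cosmetic: you verify idempotence and compute $\Tr P$ directly, whereas the paper simply cites $P^2=P=P^\top$ and reads $\Tr P$ off the eigenvalues of $P$.
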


\begin{proof}
	Since   $\bar\Xi = P\Xi$ and $\mathbb E[\Xi]=0$, linearity gives $\mathbb E[\bar\Xi]=0$.	
	For each $k=1,\dots,d$, the $k$-th column $\Xi_{:,k}$ is distributed as $\mathcal N(0,I_N)$. Therefore,
	\[
	\bar\Xi_{:,k} = P\,\Xi_{:,k} \sim \mathcal N(0,PIP) = \mathcal N(0,P),
	\]
	where we used the symmetry and idempotence of $P$, i.e., $P^2 = P = P^\top$.
	Hence $\operatorname{Cov}(\bar\Xi_{:,k}) = P$.
	
	Finally,
	\[
	\mathbb E\|\bar\Xi\|_F^2 = \sum_{k=1}^{d} \mathbb E\|\bar\Xi_{:,k}\|^2
	= \sum_{k=1}^{d} \operatorname{Tr}(P)
	= d\operatorname{Tr}(P).
	\]
	The projection matrix $P = I_N - \frac1N\mathbf1\mathbf1^\top$ has eigenvalues
	$1$ (multiplicity $N-1$) and $0$ (multiplicity $1$), so $\operatorname{Tr}(P) = N-1$.
	Therefore:
	\[\mathbb E\|\bar\Xi\|_F^2 = (N-1)d. \]
\end{proof}

\subsection{Dirichlet energy}

To quantify the degree of smoothness of a representation we use its
Dirichlet energy.

\begin{definition}[Dirichlet energy]	
	For a representation matrix
	$H\in\mathbb{R}^{N\times d}$,
	
	\[
	\mathcal E(H)
	=
	\frac1m
	\operatorname{\Tr}
	(H^{\top}LH).
	\]	
\end{definition}

Since
$
L\mathbf1=0
$, 
the energy depends only on the projected representation.
Indeed,
\[
\mathcal E(H)
=
\frac1m
\operatorname{\Tr}
(\bar H^{\top}L\bar H).
\]
Equivalently,
\[
\mathcal E(H)
=
\frac1m
\sum_{(i,j)\in E}
\|h_i-h_j\|^2.
\]
Thus, the Dirichlet energy measures the average discrepancy between
neighboring node representations.
The following elementary characterization will be used repeatedly.

\begin{proposition}
	\label{prop:energy_zero}
	
	Assume that $G$ is connected.
	Then,
	$
	\mathcal E(H)=0
	$	
	if and only if all node representations are identical.
	
\end{proposition}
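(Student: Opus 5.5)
The plan is to work directly from the edge-sum form of the Dirichlet energy established just above,
\[
\mathcal E(H)=\frac1m\sum_{(i,j)\in E}\|h_i-h_j\|^2 ,
\]
and to prove the two implications separately. Note that $m\ge 1$ whenever $N\ge 2$, because $G$ is connected, so the normalization is well defined. In the degenerate case $N=1$ the statement is trivially true in the forward direction, and we adopt the convention that the energy is zero.

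\emph{If all representations are identical, then the energy vanishes.} Suppose $h_i=h$ for every $i$. Then every summand $\|h_i-h_j\|^2$ is zero, so $\mathcal E(H)=0$. Equivalently, $H=\mathbf1 h^\top$, so $\bar H=PH=0$. Since $L\mathbf1=0$, we get $\Tr(H^\top L H)=\Tr(h\,\mathbf1^\top L\mathbf1\, h^\top)=0$.

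\emph{If the energy vanishes, then all representations are identical.} Suppose $\mathcal E(H)=0$. The energy is a sum of nonnegative terms, so each one is zero, which means $h_i=h_j$ for every edge $(i,j)\in E$. Now fix any two nodes $u,v$. Because $G$ is connected, there is a path $u=v_0,v_1,\dots,v_k=v$ whose consecutive vertices are adjacent. Applying the edge equality along the path and using transitivity gives $h_u=h_{v_1}=\dots=h_v$. Therefore all the $h_i$ coincide.

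As an alternative, spectral route that matches the later use of $\lambda_2$: write $\mathcal E(H)=\frac1m\sum_k \bar H_{:,k}^\top L\bar H_{:,k}$. Each column of $\bar H$ is orthogonal to $\mathbf1$, so
\[
\mathcal E(H)\ge \frac{\lambda_2}{m}\|\bar H\|_F^2 .
\]
Since $\lambda_2>0$ by connectivity, zero energy forces $\bar H=0$, that is, $H=\frac1N\mathbf1\mathbf1^\top H$, and this matrix has identical rows.

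The only substantive step is the use of connectivity, either through the path argument or through $\lambda_2>0$; everything else is immediate. I will include the Rayleigh-quotient inequality explicitly, because it is the bound the main theorem will rely on.
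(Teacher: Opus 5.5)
Your proof is correct, but your primary argument takes a different route from the paper's. The paper argues spectrally. Since $L$ is positive semidefinite, $\mathcal E(H)\ge 0$, and since $\ker(L)=\operatorname{span}\{\mathbf 1\}$ for a connected graph, $\mathcal E(H)=0$ if and only if $\bar H=0$, that is, all rows of $H$ coincide.

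You instead use the edge-sum form. From the vanishing of a sum of nonnegative terms you get $h_i=h_j$ across every edge, and you propagate this equality along paths using connectivity. This is more elementary and self-contained. Your path argument is exactly the standard proof of $\ker(L)=\operatorname{span}\{\mathbf 1\}$, which the paper quotes without proof. It also avoids a step the paper leaves implicit: that a positive semidefinite quadratic form vanishes only on the kernel of $L$, applied column by column to $H$. The paper's version is shorter and is phrased in terms of the projection $P$ and the Laplacian spectrum, which is the machinery used later in Lemma~\ref{lem:spectral_gap} and Theorem~\ref{thm:main}.

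Your secondary spectral route is essentially the paper's proof. It replaces the kernel characterization with $\lambda_2(L)>0$ and the Rayleigh-quotient bound of Lemma~\ref{lem:spectral_gap}. Note that $\lambda_2(L)>0$ is equivalent to $\ker(L)$ being one-dimensional. So that route depends on the same spectral fact that your combinatorial argument actually establishes.
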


\begin{proof}
	
	Since $L$ is positive semidefinite,
	
	\[
	\mathcal E(H)\ge0.
	\]
	
	Moreover,
	
	\[
	\ker(L)
	=
	\operatorname{span}\{\mathbf1\}.
	\]
	
	Hence,
	
	\[
	\mathcal E(H)=0
	\]
	
	if and only if
	$
	\bar H=0$,	
	which is equivalent to all rows of $H$ being identical.	
\end{proof}

\subsection{Oversmoothing}

We now formalize the notion of oversmoothing adopted in this paper.
Several notions of oversmoothing have been proposed in the literature (e.g., based on feature variance, pairwise distances, or representation rank). In this paper we adopt the following energy-based definition.
This definition is particularly convenient because the Dirichlet energy is
directly linked to the graph Laplacian and admits spectral lower bounds.

\begin{definition}[Oversmoothing]
	\label{def:oversmoothing}
	
	Let
	
	\[
	(H^{(t)})_{t\ge0}
	\]
	
	be a stochastic sequence of node representations.	
	The sequence is said to exhibit
	\emph{oversmoothing}
	if
	
	\[
	\lim_{t\rightarrow\infty}
	\mathbb E
	\left[
	\mathcal E(H^{(t)})
	\right]
	=
	0.
	\]
	
\end{definition}

The following definition is central to the paper.

\begin{definition}[Escape from oversmoothing]
	\label{def:escape}
	The sequence
	$(H^{(t)})$
	is said to
	\emph{escape oversmoothing}
	if there exists a constant
	
	\[
	\delta>0
	\]
	
	such that
	
	\[
	\liminf_{t\rightarrow\infty}
	\mathbb E
	\left[
	\mathcal E(H^{(t)})
	\right]
	\ge
	\delta.
	\]
	
\end{definition}

This definition is purely geometric.
It concerns the asymptotic behavior of the hidden representations and does
not refer to optimization, statistical generalization, or downstream task
performance.

\subsection{Spectral lower bound}

The following inequality connects the Dirichlet energy with the projected
representation and will play a central role in the analysis.

\begin{lemma}
	\label{lem:spectral_gap}
	
	For every
	$H\in\mathbb R^{N\times d}$,
	
	\[
	\mathcal E(H)
	\ge
	\frac{\lambda_2(L)}{m}
	\,
	\|\bar H\|_F^2.
	\]
	
\end{lemma}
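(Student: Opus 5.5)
The plan is to reduce the claim to a column-by-column Rayleigh quotient bound for $L$ on the orthogonal complement of $\operatorname{span}\{\mathbf1\}$.

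\emph{Reduction to the projected representation.} As noted after the definition of Dirichlet energy, $L\mathbf1=0$ gives $LP=PL=L$. Hence $\mathcal E(H)=\frac1m\operatorname{Tr}(\bar H^\top L\bar H)$. Writing $\bar H$ column-wise as $\bar h_{:,1},\dots,\bar h_{:,d}$, the trace splits as
\[
\operatorname{Tr}(\bar H^\top L\bar H)=\sum_{k=1}^d \bar h_{:,k}^\top L\,\bar h_{:,k}.
\]
This reduces the claim to a statement about each column separately.

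\emph{Spectral step.} Each column satisfies $\mathbf1^\top\bar h_{:,k}=\mathbf1^\top P h_{:,k}=0$, since $P\mathbf1=0$ and $P$ is symmetric. So every $\bar h_{:,k}$ lies in $\operatorname{span}\{\mathbf1\}^\perp$. The matrix $L$ is symmetric positive semidefinite, and connectivity gives $\ker L=\operatorname{span}\{\mathbf1\}$. Take an orthonormal eigenbasis $u_1=\mathbf1/\sqrt N,u_2,\dots,u_N$ with eigenvalues $\lambda_1=0<\lambda_2\le\cdots\le\lambda_N$. Expand $x=\bar h_{:,k}=\sum_{i\ge2}c_iu_i$; the coefficient on $u_1$ vanishes because $x\perp\mathbf1$. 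Then
\[
x^\top Lx=\sum_{i\ge2}\lambda_i c_i^2\ge\lambda_2\sum_{i\ge2}c_i^2=\lambda_2\|x\|^2 .
\]
Equivalently, this is the Courant–Fischer characterization of $\lambda_2$.

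\emph{Conclusion.} Summing the per-column bound over $k$ gives $\operatorname{Tr}(\bar H^\top L\bar H)\ge\lambda_2\sum_k\|\bar h_{:,k}\|^2=\lambda_2\|\bar H\|_F^2$. Dividing by $m>0$ yields the stated inequality.

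No step is a genuine obstacle. The only point requiring care is that the orthogonality constraint applies to each column of $\bar H$ separately, not to $\bar H$ as a matrix; that is why I split the trace into columns before invoking the variational bound. Connectivity is used only to ensure $\lambda_2>0$ and that the kernel is exactly $\operatorname{span}\{\mathbf1\}$. The inequality itself holds for any eigen-ordering once $x\perp u_1$.
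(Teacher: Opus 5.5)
Your proof is correct and takes the same route as the paper. The paper notes $\bar H\perp\ker(L)$, invokes Courant--Fischer to get $\Tr(\bar H^{\top}L\bar H)\ge\lambda_2(L)\|\bar H\|_F^2$, and divides by $m$; your column-by-column eigen-expansion makes explicit what that one-line invocation leaves implicit, namely that the variational bound is applied to each column of $\bar H$ separately and then summed.
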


\begin{proof}
	
	Since	
	\[
	\bar H
	\perp
	\ker(L),
	\]	
	the Courant--Fischer characterization of the second eigenvalue gives	
	\[
	\operatorname{\Tr}
	(
	\bar H^{\top}
	L
	\bar H
	)
	\ge
	\lambda_2(L)
	\,
	\|\bar H\|_F^2.
	\]
	Dividing by $m$ yields the result.
	
\end{proof}

\section{Noisy Recurrent Graph Neural Networks}
\label{sec:noisy_dynamics}

In this section we formulate the noisy recurrent graph neural network
studied throughout the paper.
Rather than viewing the architecture as a deep feed-forward network, we
interpret it as a discrete-time stochastic dynamical system evolving on
the representation space
$\mathbb{R}^{N\times d}$.
This perspective naturally leads to a Markov process whose long-term
behavior can be analyzed using tools from stochastic stability and
ergodic theory.

\subsection{Deterministic recurrent dynamics}

Let
\[
F:\mathbb{R}^{N\times d}
\longrightarrow
\mathbb{R}^{N\times d}
\]
denote the deterministic update map of a recurrent graph neural network.
The precise form of $F$ is not prescribed.
It may consist of any combination of graph convolutions,
attention mechanisms,
normalization layers,
nonlinear activations,
or residual connections,
provided that the assumptions introduced below are satisfied.

Starting from
\(
H^{(0)}=X
\),
the deterministic recurrent dynamics are
\[
H^{(t+1)}
=
F(H^{(t)}),
\qquad
t\ge0.
\]
Without stochastic perturbations,
repeated application of $F$
may progressively reduce the variation of the node representations,
eventually leading to oversmoothing.

\subsection{Noisy recurrent dynamics}

To counteract this collapse,
we inject an independent Gaussian perturbation after every recurrent
update.

Specifically,
\[
H^{(t+1)}
=
F(H^{(t)})
+
\sigma\,\Xi^{(t)},
\]
where
\(
\sigma>0
\)
is the noise intensity and
\(
\Xi^{(t)}
\in
\mathbb{R}^{N\times d}
\)
has independent standard Gaussian entries:
\[
\Xi^{(t)}_{ij}
\sim
\mathcal N(0,1).
\]

The sequence
\[
(\Xi^{(t)})_{t\ge0}
\]
is assumed to be independent and identically distributed and independent
of the initial representation.
The additive perturbation is generated from isotropic Gaussian noise acting on all nodes and feature dimensions. After projection, the noise is confined to the zero-mean subspace.
Its role is not to model uncertainty in the data,
but to continuously inject variation into the recurrent dynamics.

The qualitative difference between the deterministic and noisy dynamics is illustrated in Figure~\ref{fig:overview}. In the absence of noise, repeated contraction drives the representations toward the consensus manifold, whereas additive noise continuously injects variation, leading to a stationary distribution away from consensus.

\begin{figure}
	\centering
	
	\begin{tikzpicture}[
>=Stealth,
		node distance=1.3cm,
		every node/.style={font=\small},
		state/.style={circle,draw,fill=black,inner sep=1.5pt},
		title/.style={font=\bfseries}
		]
		
		
		\node[title] at (0,3.6) {Deterministic dynamics};
		
		\node[state] (a1) at (0,2.8) {};
		\node[state] (a2) at (0,1.9) {};
		\node[state] (a3) at (0,1.0) {};
		\node[state] (a4) at (0,0.1) {};
		
		\draw[->] (a1)--(a2);
		\draw[->] (a2)--(a3);
		\draw[->] (a3)--(a4);
		
		\draw[dashed,thick] (-1.2,-0.7)--(1.2,-0.7);
		
		\node at (0,-1.2) {Consensus manifold};
		\node at (0,-1.8) {$\mathcal E(H)\rightarrow0$};
		
		
		\node[title] at (7,3.6) {Noisy dynamics};
		
		\node[state] (b1) at (7,2.8) {};
		\node[state] (b2) at (6.3,1.9) {};
		\node[state] (b3) at (7.7,1.9) {};
		\node[state] (b4) at (6.4,1.0) {};
		\node[state] (b5) at (7.6,1.0) {};
		\node[state] (b6) at (7,0.1) {};
		
		\draw[->] (b1)--(b2);
		\draw[->] (b1)--(b3);
		\draw[->] (b2)--(b4);
		\draw[->] (b2)--(b5);
		\draw[->] (b3)--(b4);
		\draw[->] (b3)--(b5);
		\draw[->] (b4)--(b6);
		\draw[->] (b5)--(b6);
		
		\draw[dashed,thick] (5.8,-0.7) ellipse (1.6 and 0.7);
		
		\node at (7,-1.5) {Invariant distribution};
		\node at (7,-2.1) {$\mathcal E(H)>0$};
		
	\end{tikzpicture}
	
	\caption{Conceptual illustration of the long-term behavior of recurrent graph neural networks. Left: without additive noise, repeated contraction drives the hidden representations toward the consensus manifold, resulting in vanishing Dirichlet energy (oversmoothing). Right: additive Gaussian noise continually injects variability, producing a stationary distribution on the zero-mean subspace with strictly positive expected Dirichlet energy, thereby preventing asymptotic feature collapse.}
	\label{fig:overview}
	
\end{figure}

\subsection{Projection onto the zero-mean subspace}

Since the graph Laplacian annihilates constant vectors,
only the projected representations
\[
\bar H^{(t)}
=
PH^{(t)}
\]
are relevant for the analysis of oversmoothing.
Applying the projection operator gives:
\[
\bar H^{(t+1)}
=
PF(H^{(t)})
+
\sigma
P\Xi^{(t)}.
\]

Let
\[
\mathcal X=\operatorname{Im}(P)
\]
denote the zero-mean subspace. Since
\[
\mathcal X\subset\mathbb R^{N\times d},
\]
the restriction of
\[
F:\mathbb R^{N\times d}\to\mathbb R^{N\times d}
\]
to \(\mathcal X\) is well defined.

Assumption~\ref{ass:projection} implies that the projected deterministic
update depends only on the projected representation. Hence we define
\[
\Delta:\mathcal X\rightarrow\mathcal X,
\qquad
\Delta(\bar H)=PF(\bar H).
\]

The projected dynamics therefore take the form
\[
\boxed{
	\bar H^{(t+1)}
	=
	\Delta(\bar H^{(t)})
	+
	\sigma\bar\Xi^{(t)}.
}
\]

This representation shows that the evolution of the projected
representations is completely determined within the state space
\(\mathcal X\).
It will be used throughout the subsequent analysis.

\subsection{Standing assumptions}

The theoretical results in the following sections are established under
the assumptions below.

\begin{assumption}[Graph connectivity]
	\label{ass:connected}
	
	The graph
	$G$
	is finite,
	connected,
	and undirected.	
\end{assumption}

Connectivity guarantees that
\(\lambda_2(L)>0\),
which is essential for obtaining positive lower bounds on the Dirichlet
energy.


\begin{assumption}[Projection invariance]
	\label{ass:projection}
	
	The deterministic update satisfies	
	\[
	PF(H)
	=
	PF(PH),
	\qquad
	H\in\mathbb R^{N\times d}.
	\]
	Equivalently, two representations that differ only by a constant vector
	shared by all nodes produce the same projected update.
	
\end{assumption}

Assumption~\ref{ass:projection} requires that the projected component of
the deterministic update depends only on the projected representation.
Equivalently, adding the same constant feature vector to every node does
not affect the evolution of the zero-mean component.
This property is naturally satisfied by a broad class of graph neural
network architectures whose message-passing operations depend only on
relative interactions between neighboring node features.
Examples include linear graph convolution operators, graph diffusion
layers, and recurrent message-passing schemes without global bias terms.

More generally, the assumption is structural rather than restrictive: it
ensures that the consensus subspace
$\operatorname{span}\{\mathbf 1\}$ is invariant under the deterministic
dynamics and therefore allows the projected dynamics to evolve as a
closed system on the state space
$\mathcal X=\operatorname{Im}(P)$.
This invariance is essential for the Markov formulation developed in the
subsequent analysis.


\begin{assumption}[Global contraction]
	\label{ass:contraction}
	
	The mapping	
	\[
	\Delta:\mathcal X\rightarrow\mathcal X
	\]	
	is globally Lipschitz with constant
	\(
	0\le\alpha<1
	\),
	that is,	
	\[
	\|
	\Delta(H_1)-\Delta(H_2)
	\|_F
	\le
	\alpha
	\|
	H_1-H_2
	\|_F
	\]
	for every
	$H_1,H_2\in\mathcal X$.
	
	In addition, the origin is a fixed point of the deterministic dynamics,
	
	\[
	\boxed{\Delta(0)=0.}
	\]
	
\end{assumption}

The contraction property guarantees that the deterministic dynamics
reduce distances on the projected state space.
The additional condition
$\Delta(0)=0$
ensures that the consensus representation is an equilibrium of the
deterministic dynamics, allowing the contraction estimate to be applied
directly to the norm of the projected representations.


\begin{assumption}[Additive noise]
	\label{ass:noise}	
	The random matrices
	$(\Xi^{(t)})_{t\ge0}$
	are independent and identically distributed,
	independent of the initial condition,
	centered,	
	\[
	E[\Xi^{(t)}]=0,
	\]	
	and satisfy
	\[
	E\|\Xi^{(t)}\|_F^2<\infty.
	\]	
\end{assumption}
We additionally assume that the additive noise is Gaussian.


The four assumptions play distinct roles. Graph connectivity provides the spectral properties of the Laplacian. Projection invariance allows the projected dynamics to evolve autonomously. Global contraction ensures ergodicity of the projected Markov chain. Finally, the additive noise supplies persistent stochastic excitation.

\subsection{Markov formulation}

Under Assumption~\ref{ass:projection}, the projected dynamics evolve
autonomously on the zero-mean subspace
\(
\mathcal X
=
\operatorname{Im}(P)
\).
The projected recursion therefore takes the form
\[
\boxed{
	\bar H^{(t+1)}
	=
	\Delta(\bar H^{(t)})
	+
	\sigma\bar\Xi^{(t)}.
}
\]

Since the driving noise is independent across time, this recursion
naturally defines a stochastic dynamical system on the state space
\(\mathcal X\).
The following proposition formalizes this observation.

\begin{proposition}[Projected dynamics define a Markov chain]
	\label{prop:markov}
	
	Suppose Assumption~\ref{ass:projection} holds.
	Assume further that
	\[
	F:\mathbb{R}^{N\times d}\rightarrow\mathbb{R}^{N\times d}
	\]
	is Borel measurable.
	Then the projected process
	\[
	(\bar H^{(t)})_{t\ge0}
	\]
	defined by
	\[
	\bar H^{(t+1)}
	=
	\Delta(\bar H^{(t)})
	+
	\sigma\bar\Xi^{(t)}
	\]
	is a time-homogeneous Markov chain on the state space
	\(
	\mathcal X=\operatorname{Im}(P)
	\).
\end{proposition}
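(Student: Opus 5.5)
The plan is to realize the projected process as an iterated random function system and read off the Markov property from the independence of the driving noise. First, I will check that the recursion really lives on $\mathcal X$. Applying $P$ to $H^{(t+1)}=F(H^{(t)})+\sigma\Xi^{(t)}$ and using Assumption~\ref{ass:projection} gives
\[
\bar H^{(t+1)}=PF(PH^{(t)})+\sigma P\Xi^{(t)}=\Delta(\bar H^{(t)})+\sigma\bar\Xi^{(t)}.
\]
Since $\operatorname{Im}(P)=\mathcal X$, both terms lie in $\mathcal X$, so $\bar H^{(t)}\in\mathcal X$ for all $t$. Next I will define the update map
\[
\Phi:\mathcal X\times\mathbb R^{N\times d}\to\mathcal X,\qquad \Phi(x,\xi)=PF(x)+\sigma P\xi .
\]
This map is Borel measurable: $F$ is Borel by hypothesis, $x\mapsto PF(x)$ is the composition of a Borel map with a continuous linear map, and addition is continuous. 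Hence $\bar H^{(t+1)}=\Phi(\bar H^{(t)},\Xi^{(t)})$.

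Second, I will identify the transition kernel
\[
K(x,B)=\mathbb P\bigl(\Phi(x,\Xi)\in B\bigr)=\mu\bigl(\{\xi:\Phi(x,\xi)\in B\}\bigr),
\]
where $\mu$ is the common law of the $\Xi^{(t)}$ (standard Gaussian on $\mathbb R^{N\times d}$) and $B$ is a Borel subset of $\mathcal X$. For fixed $B$, the set $\{(x,\xi):\Phi(x,\xi)\in B\}$ is Borel in the product space. By Fubini--Tonelli, $x\mapsto K(x,B)$ is therefore measurable. For fixed $x$, $K(x,\cdot)$ is a probability measure, being a pushforward of $\mu$. So $K$ is a Markov kernel, and it does not depend on $t$ because the $\Xi^{(t)}$ are identically distributed.

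Third, I will verify the Markov property with respect to the filtration $\mathcal F_t=\sigma(H^{(0)},\Xi^{(0)},\dots,\Xi^{(t-1)})$. By induction, $\bar H^{(t)}$ is a measurable function of $(H^{(0)},\Xi^{(0)},\dots,\Xi^{(t-1)})$, so it is $\mathcal F_t$-measurable. Assumption~\ref{ass:noise} makes $\Xi^{(t)}$ independent of $\mathcal F_t$. The freezing (substitution) lemma for conditional expectations then gives
\[
\mathbb P\bigl(\bar H^{(t+1)}\in B\mid\mathcal F_t\bigr)=\mathbb E\bigl[\mathbf 1_B(\Phi(\bar H^{(t)},\Xi^{(t)}))\mid\mathcal F_t\bigr]=K(\bar H^{(t)},B)
\]
almost surely. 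Because the right-hand side is $\sigma(\bar H^{(0)},\dots,\bar H^{(t)})$-measurable and this $\sigma$-algebra is contained in $\mathcal F_t$, the tower property transfers the identity to the natural filtration of $(\bar H^{(t)})$. This is exactly the statement that the projected process is a time-homogeneous Markov chain with kernel $K$.

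The main obstacle is the measurability bookkeeping: the joint measurability of $\Phi$, so that Fubini applies and $K$ is a genuine kernel, and the correct use of the substitution lemma. Everything else is a direct consequence of Assumption~\ref{ass:projection} and the i.i.d. independence in Assumption~\ref{ass:noise}. I will handle the first point by noting that continuous operations composed with Borel maps on products of Polish spaces remain Borel, which makes $\Phi$ jointly measurable.
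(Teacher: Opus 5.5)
Your proposal is correct and follows the same route as the paper. Both arguments write $\bar H^{(t+1)}$ as a fixed measurable function of the current state and fresh noise that is independent of the past, identify the kernel $K(x,A)=\mathbb P(\Delta(x)+\sigma\bar\Xi\in A)$, and get time-homogeneity from the identical distribution of the noise. Your version is more careful: the paper simply asserts that $x\mapsto K(x,A)$ is measurable and that $\bar\Xi^{(t)}$ is independent of $\sigma(\bar H^{(0)},\ldots,\bar H^{(t)})$, and your Fubini argument, the freezing lemma on the larger filtration $\mathcal F_t$, and the tower-property step supply exactly those missing details.
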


\begin{proof}
	
	Since $P$ is linear and $F$ is Borel measurable,
	the mapping
	\[
\Delta=P\circ (F|_{\mathcal X}):\mathcal X\to\mathcal X,
	\]
	is Borel measurable.
	For every $t\ge0$,	
	\[
	\bar H^{(t+1)}
	=
	\Delta(\bar H^{(t)})
	+
	\sigma\bar\Xi^{(t)},
	\]
	where the random matrices
	$(\bar\Xi^{(t)})_{t\ge0}$
	are independent and identically distributed and are independent of	
	\[
	\sigma\!\left(
	\bar H^{(0)},
	\ldots,
	\bar H^{(t)}
	\right).
	\]
	
	Let
	$A\subset\mathcal X$
	be a Borel set.
	Conditioning on the past gives	
	\[
	\begin{aligned}
		&
		\mathbb P
		\!\left(
		\bar H^{(t+1)}
		\in A
		\,\middle|\,
		\bar H^{(0)},
		\ldots,
		\bar H^{(t)}
		\right)
		\\
		&=
		\mathbb P
		\!\left(
		\Delta(\bar H^{(t)})
		+
		\sigma\bar\Xi^{(t)}
		\in A
		\,\middle|\,
		\bar H^{(0)},
		\ldots,
		\bar H^{(t)}
		\right).
	\end{aligned}
	\]
	Since
	$\bar\Xi^{(t)}$
	is independent of the entire history,
	\[
	\mathbb P
	\!\left(
	\bar H^{(t+1)}
	\in A
	\,\middle|\,
	\bar H^{(0)},
	\ldots,
	\bar H^{(t)}
	\right)
	=
	K(\bar H^{(t)},A),
	\]
	where	
	\[
	K(x,A)
	=
	\mathbb P
	\!\left(
	\Delta(x)
	+
	\sigma\bar\Xi
	\in A
	\right).
	\]
	Thus the conditional distribution of
	$\bar H^{(t+1)}$
	depends only on the current state
	$\bar H^{(t)}$.
	Hence
	$(\bar H^{(t)})_{t\ge0}$
	is a Markov chain.
	Moreover, the kernel $K$ does not depend on $t$ because both the mapping
	$\Delta$
	and the law of the additive noise are time-independent.
	Therefore the chain is time homogeneous.	
\end{proof}

Proposition~\ref{prop:markov} shows that the projected recursion defines
a measurable time-homogeneous Markov chain on the state space
$\mathcal X$.
Its transition kernel is given by
\[
K(x,A)
=
\mathbb P
\left(
\Delta(x)+\sigma\bar\Xi\in A
\right),
\]
for every $x\in\mathcal X$ and every Borel set
$A\subset\mathcal X$.

Proposition \ref{prop:markov} provides the stochastic framework used throughout the remainder of the paper.
Our objective is therefore to characterize the invariant probability
measure of the projected Markov chain and to study the long-term
behavior of the projected hidden representations.

\subsection{Roadmap of the analysis}

The remainder of the analysis proceeds in three steps. We first establish ergodicity of the projected Markov chain. We then derive a stationary second-moment identity and show that additive noise induces persistent stationary variance. Finally, combining this variance estimate with the spectral lower bound yields escape from oversmoothing.

\begin{figure}
	\centering
	
	\begin{tikzpicture}[
		>=Stealth,
		node distance=1.7cm,
		every node/.style={font=\small},
		box/.style={
			draw,
			rounded corners,
			align=center,
			minimum width=5.3cm,
			minimum height=0.95cm,
			fill=blue!6
		}
		]
		
		\node[box] (ergodicity)
		{
			\textbf{Theorem~\ref{thm:ergodicity}}\\
			Existence, uniqueness,\\
			and geometric ergodicity
		};
		
		\node[box,below=of ergodicity] (varianceid)
		{
			\textbf{Proposition~\ref{prop:variance_identity}}\\
			Stationary second-moment identity
		};
		
		\node[box,below=of varianceid] (variance)
		{
			\textbf{Lemma~\ref{lem:variance_contraction}}\\
			Contractive variance estimate
		};
		
		\node[box,below=of variance] (persist)
		{
			\textbf{Theorem~\ref{thm:variance_persistence}}\\
			Persistence of projected variance
		};
		
		\node[box,below left=2.1cm and 1.9cm of persist] (spectral)
		{
			\textbf{Lemma~\ref{lem:spectral_gap}}\\
			Spectral-gap inequality
		};
		
		\node[
		box,
		below right=1.8cm and 1.7cm of persist,
		xshift=-7cm
		] (main)
		{
			\textbf{Theorem~\ref{thm:main}}\\
			Positive lower bound for\\
			stationary Dirichlet energy\\
			(Escape from oversmoothing)
		};
		
		\draw[->] (ergodicity) -- (varianceid);
		\draw[->] (varianceid) -- (variance);
		\draw[->] (variance) -- (persist);
		\draw[->] (persist) -- (main);
		\draw[->] (spectral) -- (main);
		
	\end{tikzpicture}
	
	\caption{
		Logical dependency of the theoretical development.
		Theorem~\ref{thm:ergodicity} establishes the existence, uniqueness, and
		geometric ergodicity of the stationary distribution.
		Proposition~\ref{prop:variance_identity} derives the stationary
		second-moment identity, which is used together with
		Lemma~\ref{lem:variance_contraction} to prove
		Theorem~\ref{thm:variance_persistence}.
		Finally, combining persistence of the projected variance with the
		spectral-gap estimate of
		Lemma~\ref{lem:spectral_gap}
		yields the main result,
		Theorem~\ref{thm:main}, which establishes a strictly positive lower bound
		for the stationary Dirichlet energy and therefore the prevention of
		asymptotic oversmoothing.
	}
	\label{fig:proof_structure}
	
\end{figure}

Figure~\ref{fig:proof_structure} summarizes the logical dependencies
between the principal results proved in the remainder of the paper.
The probabilistic analysis establishes the existence of a unique
stationary distribution together with a strictly positive stationary
variance, while the graph-theoretic estimate converts this persistent
variance into a positive lower bound for the Dirichlet energy, leading
to the main escape-from-oversmoothing theorem.

\section{Asymptotic Analysis}
\label{sec:Asymptotic_Analysis}

\subsection{Existence and uniqueness of the invariant measure}

We begin by establishing the existence of a unique stationary
distribution for the noisy recurrent dynamics.
This result is not specific to graph neural networks; rather, it is a
classical consequence of the theory of contractive iterated random
functions.
The overall proof strategy is illustrated in
Figure~\ref{fig:proof_structure}.
The results of this section are developed in the order indicated there,
culminating in the proof of the main theorem.

Recall that the projected Markov chain introduced in
Section~\ref{sec:noisy_dynamics}
evolves according to
\[
\bar H^{(t+1)}
=
\Delta(\bar H^{(t)})
+
\sigma\bar\Xi^{(t)},
\]
on the state space
$\mathcal X=\operatorname{Im}(P)$.
Since $\mathcal X=\operatorname{Im}(P)$ is a finite-dimensional linear subspace of $\mathbb R^{N\times d}$, it is complete under the Frobenius norm.

The deterministic dynamics contract representations toward consensus, whereas the additive noise continuously injects energy into the projected subspace. The competition between these two effects suggests the existence of a stationary regime. The following theorem formalizes this intuition.

\begin{theorem}[Ergodicity of the projected Markov chain]
	\label{thm:ergodicity}
	
Suppose that Assumptions
\ref{ass:connected},
\ref{ass:projection},
\ref{ass:contraction},
and
\ref{ass:noise}
hold.
	
Then the projected Markov chain
$(\bar H^{(t)})_{t\ge0}$
with transition kernel $K$:
	\begin{enumerate}
		
		\item admits a unique invariant probability measure
		$\mu$;
		
		\item converges to $\mu$ from every initial condition;
		
		\item converges geometrically in the Wasserstein metric.
		
	\end{enumerate}	
\end{theorem}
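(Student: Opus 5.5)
The plan is a synchronous coupling argument in the Wasserstein-2 (or Wasserstein-1) metric on $\mathcal X$, followed by a Banach fixed-point argument on the space of probability measures with finite second moment.

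\textbf{Step 1: moment bounds.} I first check that the kernel $K$ maps $\mathcal P_2(\mathcal X)$, the probability measures with finite second moment, into itself. If $x\sim\nu\in\mathcal P_2(\mathcal X)$, then $\Delta(0)=0$ and Assumption~\ref{ass:contraction} give $\|\Delta(x)\|_F\le\alpha\|x\|_F$. Hence
\[
\mathbb E\|\Delta(x)+\sigma\bar\Xi\|_F^2\le 2\alpha^2\mathbb E\|x\|_F^2+2\sigma^2(N-1)d<\infty,
\]
using Lemma~\ref{lem:projected_noise} and Assumption~\ref{ass:noise}. So $\nu K\in\mathcal P_2(\mathcal X)$.

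\textbf{Step 2: contraction of the kernel.} Given $\nu_1,\nu_2\in\mathcal P_2(\mathcal X)$, I take an optimal coupling $(x_1,x_2)$ for $W_2(\nu_1,\nu_2)$. I then drive both chains with the same noise $\bar\Xi$, drawn independently of $(x_1,x_2)$. The noise cancels in the difference, so
\[
\|(\Delta(x_1)+\sigma\bar\Xi)-(\Delta(x_2)+\sigma\bar\Xi)\|_F=\|\Delta(x_1)-\Delta(x_2)\|_F\le\alpha\|x_1-x_2\|_F .
\]
Taking expectations gives $W_2(\nu_1K,\nu_2K)\le\alpha W_2(\nu_1,\nu_2)$.

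\textbf{Step 3: fixed point.} Since $\mathcal X$ is a complete separable finite-dimensional space, $(\mathcal P_2(\mathcal X),W_2)$ is a complete metric space (Villani). The Banach fixed-point theorem then yields a unique $\mu\in\mathcal P_2(\mathcal X)$ with $\mu K=\mu$, and $W_2(\nu K^t,\mu)\le\alpha^tW_2(\nu,\mu)$. This gives item~3, and item~2 for every initial law in $\mathcal P_2$.

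\textbf{Step 4: extending to all initial conditions and to all invariant measures.} I expect this to be the main obstacle, because the statement asks for convergence ``from every initial condition'' and for uniqueness among \emph{all} invariant probability measures, not just those in $\mathcal P_2$.

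For convergence, I would start the chain from a deterministic point $x$, whose law $\delta_x$ lies in $\mathcal P_2$. Step 3 applies directly and gives the rate $\alpha^tW_2(\delta_x,\mu)$. For an arbitrary initial law I would switch to $W_1$ truncated at $1$, that is, the metric $\min(\|\cdot\|,1)$. The same synchronous coupling gives $\mathbb E\min(\alpha^t\|x_1-x_2\|_F,1)\to0$ by dominated convergence, hence weak convergence.

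For uniqueness, suppose $\mu'$ is any invariant probability measure. I couple a chain started from $\mu'$ with one started from $\mu$ using the same noise. Their distance is at most $\alpha^t\|x_1-x_2\|_F\to0$ almost surely. This forces $\mu'K^t$ and $\mu K^t$ to have the same weak limit, so $\mu'=\mu$. As an alternative route, I could invoke the Diaconis--Freedman backward-iteration result for contractive iterated random functions, which requires $\mathbb E\log^+\|\sigma\bar\Xi\|_F<\infty$; this holds by Gaussianity.

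Finally, connectivity (Assumption~\ref{ass:connected}) is not needed for this theorem. It only enters later, through Lemma~\ref{lem:spectral_gap}.
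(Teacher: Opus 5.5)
Your proof is correct, but it takes a genuinely different route from the paper's. The paper's proof is a one-step appeal to the classical theorem for contractive iterated random functions (Diaconis--Freedman, with Hairer--Mattingly as a secondary reference). It checks only that $\Delta$ is $\alpha$-Lipschitz on the complete space $\mathcal X$ and that the projected noise is i.i.d.\ with finite second moment. It then reads off existence, uniqueness and geometric convergence from that result, whose proof rests on almost sure convergence of the \emph{backward} iterates. You instead reprove the result in this special setting (a deterministic uniform contraction plus additive noise) by \emph{forward} synchronous coupling and Banach's fixed-point theorem on $(\mathcal P_2(\mathcal X),W_2)$.

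The citation buys brevity and generality: it covers random maps that contract only on average and needs only mild moment assumptions, which would matter for multiplicative or random-weight perturbations. Your route gives a self-contained argument with the explicit bound $W_2(\nu K^t,\mu)\le\alpha^t W_2(\nu,\mu)$. It also shows directly that $\mu\in\mathcal P_2(\mathcal X)$ and gives convergence specifically in $W_2$. These are exactly what the later results use. Finiteness of $\mathbb E_\mu\|\bar H\|_F^2$ is tacitly required in Proposition~\ref{prop:variance_identity} and the rearrangement that follows it, and $W_2$ convergence is what the proof of Theorem~\ref{thm:main} invokes. The paper leaves both the moment of $\mu$ and the precise metric implicit.

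Your Step~4 also makes explicit distinctions the statement glosses over. Geometric Wasserstein convergence holds for initial laws with finite second moment, in particular every deterministic initial state $\bar H^{(0)}=PX$. For arbitrary initial laws only weak convergence is available. Uniqueness holds among all invariant probability measures, not only those in $\mathcal P_2(\mathcal X)$. Your remark that Assumption~\ref{ass:connected} plays no role in this theorem is also correct.
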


\begin{proof}	
	The projected dynamics define a contractive iterated random function on the
	complete metric space
	\[
	\mathcal X=\operatorname{Im}(P),
	\]
	endowed with the Frobenius norm.
	By Assumption~\ref{ass:contraction}, the mapping
	\[
	\Delta:\mathcal X\rightarrow\mathcal X
	\]
	is globally Lipschitz with contraction constant
	\(\alpha<1\).
	Moreover, by Assumption~\ref{ass:noise}, the projected noise sequence
	\[
	(\bar\Xi^{(t)})_{t\ge0}
	\]
	is independent and identically distributed with finite second moments.
	
Therefore, the hypotheses of the classical ergodicity theorem for
contractive iterated random functions are satisfied
(see, e.g.,
\cite[Theorem~1]{diaconis1999iterated};
see also
\cite{hairer2011}).
	Consequently, the projected Markov chain admits a unique invariant
	probability measure, converges to it from every initial condition, and
	the convergence is geometric in the Wasserstein distance.	
\end{proof}

\begin{remark}
	\label{rem:ergodicity}
	
	The Gaussian assumption is not essential for
	Theorem~\ref{thm:ergodicity}.
	The result remains valid for any independent additive noise sequence
	having finite second moments.	
	Gaussianity becomes important only in the subsequent analysis,
	where the covariance structure of the noise is used to derive an explicit
	stationary variance identity and the lower bound on the Dirichlet energy.
	
\end{remark}


\subsection{Stationary second-moment identity}
\label{sec:second_moment}

Having established the existence and uniqueness of the invariant
probability measure, we next characterize the stationary second moment
of the projected hidden representations.

Throughout this subsection, let
$\mu$
denote the unique invariant probability measure of
Theorem~\ref{thm:ergodicity}.
If
\(
\bar H\sim\mu
\),
then stationarity implies that
\[
\bar H'
=
\Delta(\bar H)
+
\sigma\bar\Xi
\]
has the same distribution as
$\bar H$,
where
$\bar\Xi=P\Xi$
is independent of
$\bar H$.

The following proposition gives an exact balance relation between the
stationary second moment, the deterministic contraction, and the
variance injected by the additive noise.

\begin{proposition}[Exact stationary second-moment identity]
	\label{prop:variance_identity}
	
Suppose that Assumptions
\ref{ass:connected},
\ref{ass:projection},
\ref{ass:contraction},
and
\ref{ass:noise}
hold.	
	Then	
	\[
	\boxed{
		\mathbb E_\mu
		\!\left[
		\|\bar H\|_F^2
		\right]
		=
		\mathbb E_\mu
		\!\left[
		\|\Delta(\bar H)\|_F^2
		\right]
		+
		\sigma^2 (N-1)d.
	}
	\]	
\end{proposition}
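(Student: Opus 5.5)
The argument rests on stationarity together with independence of the noise from the current state. Take $\bar H\sim\mu$ and let $\bar\Xi=P\Xi$ be independent of $\bar H$. Stationarity says that $\bar H'=\Delta(\bar H)+\sigma\bar\Xi$ has law $\mu$, so $\mathbb E_\mu\|\bar H\|_F^2=\mathbb E\|\bar H'\|_F^2$ provided these quantities are finite. The first step is therefore to show that $\mu$ has a finite second moment. I would establish this before expanding anything, because otherwise the identity could read $\infty=\infty$ and the cancellations below would not be justified.

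For finiteness I would use Assumption~\ref{ass:contraction} with $\Delta(0)=0$, which gives $\|\Delta(x)\|_F\le\alpha\|x\|_F$. Start the chain at $\bar H^{(0)}=0$ and set $m_t=\mathbb E\|\bar H^{(t)}\|_F^2$. Since the noise is independent of $\bar H^{(t)}$ and centered, the cross term vanishes, and
\[
m_{t+1}\le \alpha^2 m_t+\sigma^2(N-1)d .
\]
By induction, $\sup_t m_t\le \sigma^2(N-1)d/(1-\alpha^2)$. Theorem~\ref{thm:ergodicity} gives $\bar H^{(t)}\to\mu$ in distribution. Weak lower semicontinuity of $x\mapsto\|x\|_F^2$, which is a nonnegative continuous function (Fatou/Portmanteau), then gives $\mathbb E_\mu\|\bar H\|_F^2\le\sigma^2(N-1)d/(1-\alpha^2)<\infty$. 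Finally, $\|\Delta(\bar H)\|_F\le\alpha\|\bar H\|_F$ implies that $\mathbb E_\mu\|\Delta(\bar H)\|_F^2$ is finite as well.

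Next I would expand the square:
\[
\|\bar H'\|_F^2=\|\Delta(\bar H)\|_F^2+2\sigma\langle\Delta(\bar H),\bar\Xi\rangle_F+\sigma^2\|\bar\Xi\|_F^2 .
\]
The cross term is integrable by Cauchy--Schwarz, since both factors have finite second moments. Its expectation is zero: condition on $\bar H$, then use the independence of $\bar\Xi$ from $\bar H$ and $\mathbb E[\bar\Xi]=0$ (Lemma~\ref{lem:projected_noise}, item 1). The last term has expectation $\sigma^2(N-1)d$ by Lemma~\ref{lem:projected_noise}, item 3, and this is where Gaussianity (standard entries) is used to obtain the exact constant. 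Combining these with stationarity yields the boxed identity.

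The main obstacle is the finiteness of the second moment under $\mu$. Theorem~\ref{thm:ergodicity} only cites Wasserstein convergence and does not spell out moments. I would handle this through the lower-semicontinuity argument above. An alternative would be to note that $W_2$-geometric convergence already places $\mu$ in the space of measures with finite second moment.
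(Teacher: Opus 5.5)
Your proof is correct and follows the same route as the paper: invariance of $\mu$, expansion of $\|\Delta(\bar H)+\sigma\bar\Xi\|_F^2$, the cross term vanishing by independence and $\mathbb E[\bar\Xi]=0$, and $\mathbb E\|\bar\Xi\|_F^2=(N-1)d$ from Lemma~\ref{lem:projected_noise}. Your preliminary check that $\mathbb E_\mu\|\bar H\|_F^2<\infty$ is a step the paper omits, and it is what justifies splitting the expectation term by term; note also that the bound it produces, $\mathbb E_\mu\|\bar H\|_F^2\le\sigma^2(N-1)d/(1-\alpha^2)$, is an upper bound, which is the same direction that this identity combined with Lemma~\ref{lem:variance_contraction} actually gives.
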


\begin{proof}	
	Since
	$\mu$
	is an invariant probability measure,
	\[
	\bar H
	\stackrel{d}{=}
	\Delta(\bar H)
	+
	\sigma\bar\Xi,
	\]
	where
	$\bar\Xi$
	is independent of
	$\bar H$.	
	Therefore,	
	\[
	\begin{aligned}
		\mathbb E_\mu
		\!\left[
		\|\bar H\|_F^2
		\right]
		&=
		\mathbb E_\mu
		\!\left[
		\|
		\Delta(\bar H)
		+
		\sigma\bar\Xi
		\|_F^2
		\right]
		\\
		&=
		\mathbb E_\mu
		\!\left[
		\|\Delta(\bar H)\|_F^2
		\right]
		+
		2\sigma
		\,
		\mathbb E
		\!\left[
		\left\langle
		\Delta(\bar H),
		\bar\Xi
		\right\rangle_F
		\right]
		\\
		&\qquad
		+
		\sigma^2
		\,
		\mathbb E
		\!\left[
		\|\bar\Xi\|_F^2
		\right].
	\end{aligned}
	\]
	
	It remains to evaluate the last two terms.	
	Since
	$\bar\Xi$
	is independent of
	$\bar H$
	and satisfies	
$
	\mathbb E[\bar\Xi]=0
$
	by Lemma~\ref{lem:projected_noise},	
	\[
	\begin{aligned}
		\mathbb E
		\!\left[
		\left\langle
		\Delta(\bar H),
		\bar\Xi
		\right\rangle_F
		\right]
		&=
		\mathbb E
		\!\left[
		\left\langle
		\Delta(\bar H),
		\mathbb E[\bar\Xi]
		\right\rangle_F
		\right]
		\\
		&=
		0.
	\end{aligned}
	\]
	Furthermore,
	Lemma~\ref{lem:projected_noise}
	establishes that	
	\[
	\mathbb E
	\!\left[
	\|\bar\Xi\|_F^2
	\right]
	=
	(N-1)d.
	\]
	Substituting these two identities into the previous expansion yields	
	\[
	\mathbb E_\mu
	\!\left[
	\|\bar H\|_F^2
	\right]
	=
	\mathbb E_\mu
	\!\left[
	\|\Delta(\bar H)\|_F^2
	\right]
	+
	\sigma^2(N-1)d,
	\]
	which completes the proof.
	
\end{proof}

\subsection{Persistence of stationary variance}
\label{sec:variance_persistence}

The stationary variance identity established in
Proposition~\ref{prop:variance_identity}
shows that additive noise continuously injects variability into the
projected hidden representations.
The next result shows that the deterministic contraction cannot
completely eliminate this injected variance.

We first quantify the effect of the contraction mapping on the stationary
second moment.

\begin{lemma}[Contractive variance estimate]
	\label{lem:variance_contraction}
	
Suppose that Assumptions
\ref{ass:connected},
\ref{ass:projection},
\ref{ass:contraction},
and
\ref{ass:noise}
hold.
	
	Then
	
	\[
	\boxed{
		\mathbb E_\mu
		\!\left[
		\|\Delta(\bar H)\|_F^2
		\right]
		\le
		\alpha^2
		\,
		\mathbb E_\mu
		\!\left[
		\|\bar H\|_F^2
		\right].
	}
	\]
	
\end{lemma}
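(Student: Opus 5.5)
The plan is to use the pointwise contraction bound and then integrate against $\mu$; the only delicate point is making sure the right-hand side is finite, so that the inequality between expectations is meaningful.

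\textbf{Step 1 (pointwise bound).} Fix $\bar H\in\mathcal X$. Apply Assumption~\ref{ass:contraction} with $H_1=\bar H$ and $H_2=0$, and use $\Delta(0)=0$. This gives
\[
\|\Delta(\bar H)\|_F=\|\Delta(\bar H)-\Delta(0)\|_F\le \alpha\|\bar H-0\|_F=\alpha\|\bar H\|_F .
\]
Squaring both sides, which is legitimate since both are nonnegative, yields $\|\Delta(\bar H)\|_F^2\le\alpha^2\|\bar H\|_F^2$ for every $\bar H\in\mathcal X$. The map $\Delta$ is Lipschitz, hence continuous and Borel measurable, so both sides are measurable nonnegative functions on $\mathcal X$.

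\textbf{Step 2 (integrate against $\mu$).} Integrate the pointwise inequality against the invariant measure $\mu$ from Theorem~\ref{thm:ergodicity}. Monotonicity of the integral for nonnegative functions gives the boxed inequality in the extended sense. That is, it holds in $[0,\infty]$ even if both sides are infinite.

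\textbf{Step 3 (finiteness of the second moment).} This is the main point needing care, since Proposition~\ref{prop:variance_identity} implicitly uses it. I would show $\mathbb E_\mu\|\bar H\|_F^2<\infty$ as follows.

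\begin{itemize}
\item Start the chain at $\bar H^{(0)}=0$. Independence and centering of $\bar\Xi^{(t)}$ kill the cross term, and together with Step 1 they give
\[
\mathbb E\|\bar H^{(t+1)}\|_F^2\le\alpha^2\,\mathbb E\|\bar H^{(t)}\|_F^2+\sigma^2(N-1)d .
\]
Iterating yields the uniform bound $\mathbb E\|\bar H^{(t)}\|_F^2\le\sigma^2(N-1)d/(1-\alpha^2)$.
\item By Theorem~\ref{thm:ergodicity}, $\bar H^{(t)}\to\mu$ in distribution. The function $x\mapsto\|x\|_F^2$ is continuous and nonnegative, so Fatou's lemma (portmanteau for lower semicontinuous, bounded-below functions) gives
\[
\mathbb E_\mu\|\bar H\|_F^2\le\liminf_{t\to\infty}\mathbb E\|\bar H^{(t)}\|_F^2\le\frac{\sigma^2(N-1)d}{1-\alpha^2}<\infty .
\]
\end{itemize}

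With the second moment finite, both sides of the boxed inequality are finite real numbers and the lemma holds in the usual sense. This also justifies the subtraction that will be used later to combine the lemma with Proposition~\ref{prop:variance_identity}.
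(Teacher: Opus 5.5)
Your Steps 1--2 are exactly the paper's proof (apply the contraction with $H_2=0$, use $\Delta(0)=0$, square, and integrate against $\mu$), so the proposal is correct and follows the same route. Step 3 is a valid finiteness check that the paper omits here: it runs the recursion of Lemma~\ref{lem:uniform_second_moment} from $\bar H^{(0)}=0$ and uses lower semicontinuity of $\|\cdot\|_F^2$ under weak convergence. Note that the bound it yields, $\mathbb E_\mu\|\bar H\|_F^2\le\sigma^2(N-1)d/(1-\alpha^2)$, is exactly what this lemma combined with Proposition~\ref{prop:variance_identity} gives on rearranging. This is an \emph{upper} bound, the reverse of the inequality asserted in Theorem~\ref{thm:variance_persistence}.
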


\begin{proof}
	
	By Assumption~\ref{ass:contraction},
	\[
	\|\Delta(X)-\Delta(Y)\|_F
	\le
	\alpha
	\|X-Y\|_F
	\qquad
	(\alpha<1).
	\]	
	Taking $Y=0$ gives,	
\[
\|
\Delta(H)-\Delta(0)
\|_F
\le
\alpha
\|H-0\|_F.
\]
Since
$\Delta(0)=0$,

\[
\|\Delta(H)\|_F
\le
\alpha
\|H\|_F.
\]
Squaring both sides yields
\[
\|\Delta(H)\|_F^2
\le
\alpha^2
\|H\|_F^2.
\]
Finally, taking expectations with respect to the invariant probability
	measure $\mu$ gives	
	\[
	\mathbb E_\mu
	\!\left[
	\|\Delta(\bar H)\|_F^2
	\right]
	\le
	\alpha^2
	\,
	\mathbb E_\mu
	\!\left[
	\|\bar H\|_F^2
	\right].
	\]
	
\end{proof}

The previous lemma combines naturally with the stationary variance
identity to show that the projected representations necessarily retain a
strictly positive stationary variance.

\begin{theorem}[Positive lower bound on the stationary variance]
	\label{thm:variance_persistence}
	
Suppose that Assumptions
\ref{ass:connected},
\ref{ass:projection},
\ref{ass:contraction},
and
\ref{ass:noise}
hold. Then the unique invariant probability measure satisfies
	
	\[
	\boxed{
		\mathbb E_\mu
		\!\left[
		\|\bar H\|_F^2
		\right]
		\ge
		\frac{\sigma^2 (N-1)d}
		{1-\alpha^2}
		>0.
	}
	\]
	
	Consequently, the projected hidden representations possess a strictly
	positive stationary variance.
	
\end{theorem}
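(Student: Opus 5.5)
The plan is to combine the exact balance relation of Proposition~\ref{prop:variance_identity} with a comparison between $\mathbb E_\mu\|\Delta(\bar H)\|_F^2$ and $\alpha^2\,\mathbb E_\mu\|\bar H\|_F^2$, and then solve the resulting scalar relation for $M:=\mathbb E_\mu\|\bar H\|_F^2$. I caution in advance that the comparison the lower bound needs runs \emph{opposite} to Lemma~\ref{lem:variance_contraction}, and I do not see how to obtain it from the stated assumptions alone.

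\textbf{Step 1: finiteness of $M$.} First I check that $M<\infty$, so that the rearrangements below are legitimate. By Assumption~\ref{ass:contraction} we have $\|\Delta(x)\|_F\le\alpha\|x\|_F$. Iterating from $\bar H^{(0)}=0$ and using Lemma~\ref{lem:projected_noise} gives
\[
\mathbb E\|\bar H^{(t)}\|_F^2\le \frac{\sigma^2(N-1)d}{(1-\alpha)^2}
\]
uniformly in $t$; the cross terms are handled by Minkowski's inequality. By Theorem~\ref{thm:ergodicity}, $\bar H^{(t)}$ converges in law to $\mu$, and Fatou's lemma then yields $M<\infty$.

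\textbf{Step 2: rewrite the identity.} Set $s=\sigma^2(N-1)d$. Proposition~\ref{prop:variance_identity} reads $M=\mathbb E_\mu\|\Delta(\bar H)\|_F^2+s$. This immediately gives the weaker bound $M\ge s>0$, which already yields strict positivity.

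\textbf{Step 3: the comparison needed for the sharp constant.} To reach the stated constant, it suffices to show
\[
\mathbb E_\mu\|\Delta(\bar H)\|_F^2\;\ge\;\alpha^2 M .
\]
Given this, the identity gives $M\ge \alpha^2M+s$, hence $M\ge s/(1-\alpha^2)$ as claimed.

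The main obstacle is this inequality. Lemma~\ref{lem:variance_contraction} supplies only the reverse direction, and combining it with the identity yields the upper bound $M\le s/(1-\alpha^2)$. To establish the lower comparison, I would try to interpret $\alpha$ as the exact contraction rate, meaning $\|\Delta(x)\|_F=\alpha\|x\|_F$ for all $x$. This holds, for instance, when $\Delta$ is linear and equal to $\alpha$ times an orthogonal map on $\mathcal X$. Under that interpretation the comparison is an equality and the bound is attained with equality.

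Without such a reverse-Lipschitz or isometry-type condition the comparison can fail. Take $\Delta\equiv0$, which satisfies Assumption~\ref{ass:contraction} for every $\alpha<1$. Then $M=s$, which is strictly below $s/(1-\alpha^2)$ whenever $\alpha>0$. So under the assumptions as stated, this plan proves positivity with the constant $s$ and the two-sided estimate
\[
s\le M\le \frac{s}{1-\alpha^2}.
\]
The sharp lower constant $s/(1-\alpha^2)$ requires Step~3's comparison as an additional hypothesis, and I cannot prove the statement exactly as worded.
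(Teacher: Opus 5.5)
Your diagnosis is correct, and it identifies exactly where the paper's own proof fails. The paper substitutes Lemma~\ref{lem:variance_contraction} into Proposition~\ref{prop:variance_identity} and obtains $\mathbb E_\mu\|\bar H\|_F^2\le\alpha^2\,\mathbb E_\mu\|\bar H\|_F^2+\sigma^2(N-1)d$. It then ``rearranges'' this into $(1-\alpha^2)\,\mathbb E_\mu\|\bar H\|_F^2\ge\sigma^2(N-1)d$, which reverses the inequality. What actually follows is the upper bound $\mathbb E_\mu\|\bar H\|_F^2\le\sigma^2(N-1)d/(1-\alpha^2)$, which Lemma~\ref{lem:uniform_second_moment} plus your Fatou argument also gives. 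If the theorem held, these two bounds together would force $\mathbb E_\mu\|\bar H\|_F^2=\sigma^2(N-1)d/(1-\alpha^2)$ for every admissible $\Delta$.

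Your counterexample $\Delta\equiv0$ with $\alpha\in(0,1)$ satisfies all four assumptions as written, so the boxed bound is false as stated. This is not an artifact of a non-sharp Lipschitz constant. Take $u\in\mathcal X$ with $\|u\|_F=1$, $\Delta(x)=\alpha\langle u,x\rangle_F\,u$, and $F(H)=\Delta(PH)$, which satisfies Assumption~\ref{ass:projection}. The optimal Lipschitz constant is then exactly $\alpha$, yet $\mathbb E_\mu\|\bar H\|_F^2=\sigma^2\bigl[(N-1)d-1+(1-\alpha^2)^{-1}\bigr]$. This is strictly below the claimed bound whenever $(N-1)d\ge2$ and $\alpha>0$.

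The same failure occurs for the linear model $F(H)=c\,\hat AH$ with $\hat A=A/k$ on a connected $k$-regular graph. There $\mathbb E_\mu\|\bar H\|_F^2=\sigma^2d\sum_{i\ge2}(1-c^2\mu_i^2)^{-1}$, where the $\mu_i$ are the nontrivial eigenvalues of $\hat A$. With $\alpha=c\max_{i\ge2}|\mu_i|$, this lies below $\sigma^2(N-1)d/(1-\alpha^2)$ unless all $|\mu_i|$, $i\ge2$, coincide.

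What survives is what you proved. Step~1 supplies the finiteness of $\mathbb E_\mu\|\bar H\|_F^2$, which the paper's identity tacitly needs before any rearrangement. Step~2 gives $\mathbb E_\mu\|\bar H\|_F^2\ge\sigma^2(N-1)d>0$. That is enough for the ``strictly positive stationary variance'' conclusion. Through Lemma~\ref{lem:spectral_gap}, it also gives the positive bound $\lambda_2(L)\,\sigma^2(N-1)d/m$ on the stationary Dirichlet energy in Theorem~\ref{thm:main}, only without the factor $(1-\alpha^2)^{-1}$.

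Your proposed repair is also the right one. A lower bound $\|\Delta(x)\|_F\ge\beta\|x\|_F$ gives $\mathbb E_\mu\|\bar H\|_F^2\ge\sigma^2(N-1)d/(1-\beta^2)$. The contraction assumption already forces $\|\Delta(x)\|_F\le\alpha\|x\|_F$, so the stated constant requires precisely $\|\Delta(x)\|_F=\alpha\|x\|_F$ for all $x$.
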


\begin{proof}
	
	By Proposition~\ref{prop:variance_identity},
	\[
	\mathbb E_\mu
	\!\left[
	\|\bar H\|_F^2
	\right]
	=
	\mathbb E_\mu
	\!\left[
	\|\Delta(\bar H)\|_F^2
	\right]
	+
	\sigma^2(N-1)d.
	\]
	Applying Lemma~\ref{lem:variance_contraction} yields
	\[
	\mathbb E_\mu
	\!\left[
	\|\bar H\|_F^2
	\right]
	\le
	\alpha^2
	\mathbb E_\mu
	\!\left[
	\|\bar H\|_F^2
	\right]
	+
	\sigma^2(N-1)d.
	\]
	Rearranging gives	
	\[
	(1-\alpha^2)
	\,
	\mathbb E_\mu
	\!\left[
	\|\bar H\|_F^2
	\right]
	\ge
	\sigma^2(N-1)d.
	\]
	Since
	$\alpha<1$,
	
	\[
	1-\alpha^2>0,
	\]
	and therefore	
	\[
	\mathbb E_\mu
	\!\left[
	\|\bar H\|_F^2
	\right]
	\ge
	\frac{\sigma^2(N-1)d}
	{1-\alpha^2}.
	\]	
	Because
	$\sigma>0$,
	the right-hand side is strictly positive, proving the claim.
	
\end{proof}

\begin{remark}
	
	Theorem~\ref{thm:variance_persistence}
	is purely probabilistic and does not depend on the graph topology.
	It shows that persistent additive noise prevents the projected
	representations from collapsing to zero variance, regardless of the
	particular form of the contractive mapping~$\Delta$.
	
	This result is the fundamental stochastic ingredient of the subsequent
	analysis.
	To relate stationary variance to oversmoothing, one must additionally
	connect the projected variance to the graph Dirichlet energy through the
	spectral properties of the graph Laplacian.
	
\end{remark}

\subsection{Uniform boundedness of transient second moments}

The convergence of expected Dirichlet energies in
Theorem~\ref{thm:main}
requires a uniform bound on the second moments of the transient
representations.

\begin{lemma}[Uniform boundedness of second moments]
	\label{lem:uniform_second_moment}
	
Suppose that Assumptions
\ref{ass:connected},
\ref{ass:projection},
\ref{ass:contraction},
and
\ref{ass:noise}
hold.	
	Then the projected process satisfies
	
	\[
	\sup_{t\ge0}
	\mathbb E
	\!\left[
	\|\bar H^{(t)}\|_F^2
	\right]
	<
	\infty.
	\]
	More precisely,	
	\[
	\mathbb E
	\!\left[
	\|\bar H^{(t)}\|_F^2
	\right]
	\le
	\alpha^{2t}
	\|\bar H^{(0)}\|_F^2
	+
	\frac{\sigma^2 (N-1)d}
	{1-\alpha^2},
	\qquad
	t\ge0.
	\]
	
\end{lemma}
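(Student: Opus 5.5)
The plan is to derive a one-step recursion for the second moment $m_t:=\mathbb E\|\bar H^{(t)}\|_F^2$ and then unroll it. This mirrors the stationary computation of Proposition~\ref{prop:variance_identity}, now at each finite time. Starting from the projected recursion $\bar H^{(t+1)}=\Delta(\bar H^{(t)})+\sigma\bar\Xi^{(t)}$, expand the squared Frobenius norm:
\[
\|\bar H^{(t+1)}\|_F^2=\|\Delta(\bar H^{(t)})\|_F^2+2\sigma\langle \Delta(\bar H^{(t)}),\bar\Xi^{(t)}\rangle_F+\sigma^2\|\bar\Xi^{(t)}\|_F^2 .
\]

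Next take expectations term by term. The noise $\bar\Xi^{(t)}$ is independent of $\bar H^{(t)}$, since $\bar H^{(t)}$ is a measurable function of $\bar H^{(0)},\Xi^{(0)},\dots,\Xi^{(t-1)}$. It is also centered by Lemma~\ref{lem:projected_noise}, so the cross term vanishes after conditioning on $\bar H^{(t)}$. The last term equals $\sigma^2(N-1)d$, again by Lemma~\ref{lem:projected_noise}.

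For the first term, apply the pointwise bound $\|\Delta(H)\|_F\le\alpha\|H\|_F$. This is exactly the bound used in the proof of Lemma~\ref{lem:variance_contraction}; it follows from Assumption~\ref{ass:contraction} with $\Delta(0)=0$. Together these give
\[
m_{t+1}\le \alpha^2 m_t+\sigma^2(N-1)d .
\]

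Induction then yields
\[
m_t\le \alpha^{2t}m_0+\sigma^2(N-1)d\sum_{k=0}^{t-1}\alpha^{2k}\le \alpha^{2t}m_0+\frac{\sigma^2(N-1)d}{1-\alpha^2}.
\]
Here the geometric sum is bounded by its infinite value, using $\alpha<1$. With a deterministic initial condition, $m_0=\|\bar H^{(0)}\|_F^2$, which is the stated bound. Taking the supremum over $t$ and using $\alpha^{2t}\le 1$ gives $\sup_t m_t\le \|\bar H^{(0)}\|_F^2+\sigma^2(N-1)d/(1-\alpha^2)<\infty$.

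The main obstacle is technical rather than conceptual: we must make sure each $m_t$ is finite before manipulating expectations. The cross term must be integrable, and "$\infty-\infty$" issues must be excluded. I would handle this inductively. If $m_t<\infty$, then $\Delta(\bar H^{(t)})$ is square-integrable because its norm is at most $\alpha\|\bar H^{(t)}\|_F$, and $\bar\Xi^{(t)}$ is Gaussian. Cauchy--Schwarz then shows the inner product is integrable, so Fubini/independence justifies $\mathbb E\langle\Delta(\bar H^{(t)}),\bar\Xi^{(t)}\rangle_F=0$, and $m_{t+1}<\infty$ follows.

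If the initial condition is random, the same argument works with $\|\bar H^{(0)}\|_F^2$ replaced by $\mathbb E\|\bar H^{(0)}\|_F^2$, provided that quantity is finite.
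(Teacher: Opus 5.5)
Your proposal is correct and follows essentially the same route as the paper's proof: expand the square, make the cross term vanish by independence and zero mean, and use $\|\Delta(H)\|_F\le\alpha\|H\|_F$ together with Lemma~\ref{lem:projected_noise} to obtain $m_{t+1}\le\alpha^2 m_t+\sigma^2(N-1)d$, then unroll and bound the geometric sum by $1/(1-\alpha^2)$. Your inductive check that each $m_t$ is finite, so the cross term is integrable, and your remark on random initial data are careful additions that the paper leaves implicit.
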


\begin{proof}
	
	The projected dynamics satisfy	
	\[
	\bar H^{(t+1)}
	=
	\Delta(\bar H^{(t)})
	+
	\sigma\bar\Xi^{(t)}.
	\]
	Expanding the squared Frobenius norm gives	
	\[
	\begin{aligned}
		\|\bar H^{(t+1)}\|_F^2
		&=
		\|\Delta(\bar H^{(t)})\|_F^2
		+
		2\sigma
		\langle
		\Delta(\bar H^{(t)}),
		\bar\Xi^{(t)}
		\rangle_F
		\\
		&\qquad
		+
		\sigma^2
		\|\bar\Xi^{(t)}\|_F^2.
	\end{aligned}
	\]
	Taking expectations and using the independence and zero mean of
	\(\bar\Xi^{(t)}\) yields	
	\[
	\mathbb E
	\!\left[
	\|\bar H^{(t+1)}\|_F^2
	\right]
	=
	\mathbb E
	\!\left[
	\|\Delta(\bar H^{(t)})\|_F^2
	\right]
	+
	\sigma^2 (N-1)d,
	\]	
	where Lemma~\ref{lem:projected_noise} has been used.
	
	Since
	\(\Delta(0)=0\)
	and
	\(\Delta\)
	is globally contractive,	
	\[
	\|\Delta(H)\|_F
	\le
	\alpha
	\|H\|_F.
	\]	
	Therefore,	
	\[
	\mathbb E
	\!\left[
	\|\bar H^{(t+1)}\|_F^2
	\right]
	\le
	\alpha^2
	\mathbb E
	\!\left[
	\|\bar H^{(t)}\|_F^2
	\right]
	+
	\sigma^2 (N-1)d.
	\]
	
	Define	
	\[
	M_t
	=
	\mathbb E
	\!\left[
	\|\bar H^{(t)}\|_F^2
	\right].
	\]
	Then	
	\[
	M_{t+1}
	\le
	\alpha^2 M_t
	+
	\sigma^2 (N-1)d.
	\]
	Iterating this inequality gives
	\[
	M_t
	\le
	\alpha^{2t}M_0
	+
	\sigma^2 (N-1)d
	\sum_{k=0}^{t-1}
	\alpha^{2k}.
	\]
	Since
	\(\alpha<1\),
	
	\[
	\sum_{k=0}^{t-1}
	\alpha^{2k}
	\le
	\frac{1}{1-\alpha^2},
	\]
	and hence	
	\[
	M_t
	\le
	\alpha^{2t}
	\|\bar H^{(0)}\|_F^2
	+
	\frac{\sigma^2 (N-1)d}
	{1-\alpha^2}.
	\]
	Taking the supremum over
	\(t\)
	completes the proof.
	
\end{proof}

\subsection{Escape from oversmoothing}
\label{sec:escape}

We are now in a position to establish the principal theoretical result
of the paper.
It states that the noisy recurrent graph neural network cannot converge
to a completely oversmoothed representation.
Instead, the stochastic dynamics admit a stationary regime with strictly
positive expected Dirichlet energy.

\begin{theorem}[Escape from oversmoothing under contractive noisy recurrent dynamics]
	\label{thm:main}
	
Suppose that Assumptions
\ref{ass:connected},
\ref{ass:projection},
\ref{ass:contraction},
and
\ref{ass:noise}
hold.	
	Let	
	\[
	(H^{(t)})_{t\ge0}
	\]
	denote the noisy recurrent graph neural network defined in
	Section~\ref{sec:noisy_dynamics}.
	Then	
\[
\boxed{
	\lim_{t\rightarrow\infty}
	\mathbb E
	\!\left[
	\mathcal E(H^{(t)})
	\right]
	=
	\mathbb E_\mu
	\!\left[
	\mathcal E(\bar H)
	\right]
	\ge
	\frac{\lambda_2(L)}{m}
	\,
	\frac{\sigma^2(N-1)d}
	{1-\alpha^2}
	>0.
}
\]
	Consequently, the noisy recurrent graph neural network escapes
	oversmoothing in the sense of
	Definition~\ref{def:escape}.	
\end{theorem}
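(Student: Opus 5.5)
The proof has three stages. First, reduce the Dirichlet energy to the projected chain. Second, pass to the limit $t\to\infty$ in the expected energy using a synchronous coupling. Third, bound the stationary energy from below by combining Lemma~\ref{lem:spectral_gap} with Theorem~\ref{thm:variance_persistence}.

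\textbf{Reduction to the projected chain.} Since $L\mathbf 1=0$, we have $\mathcal E(H^{(t)})=\frac1m\Tr\bigl((\bar H^{(t)})^\top L\bar H^{(t)}\bigr)=\mathcal E(\bar H^{(t)})$. So everything concerns the Markov chain of Proposition~\ref{prop:markov}.

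\textbf{Passage to the limit (main obstacle).} Theorem~\ref{thm:ergodicity} gives convergence only in a Wasserstein sense, whereas $\mathcal E$ is an unbounded quadratic functional. I will therefore argue directly with a synchronous coupling rather than quote an abstract result.

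First I show that $\mu$ has a finite second moment. Lemma~\ref{lem:uniform_second_moment} gives $\sup_t\mathbb E\|\bar H^{(t)}\|_F^2<\infty$, and $\bar H^{(t)}\to\mu$ in distribution. Fatou's lemma (via Skorokhod representation, or lower semicontinuity of $x\mapsto\|x\|_F^2$ under weak convergence) then gives $\mathbb E_\mu\|\bar H\|_F^2\le \sigma^2(N-1)d/(1-\alpha^2)<\infty$.

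Next, let $\bar G^{(0)}\sim\mu$ be independent of the initial condition, and drive both $\bar H^{(t)}$ and $\bar G^{(t)}$ with the same noise $\bar\Xi^{(t)}$. The chain $\bar G^{(t)}$ is then stationary with law $\mu$. By Assumption~\ref{ass:contraction} the noise cancels in the difference, giving the pathwise bound
\[
\|\bar H^{(t)}-\bar G^{(t)}\|_F\le\alpha^t\|\bar H^{(0)}-\bar G^{(0)}\|_F .
\]
For the energy difference I use the polarization estimate
\[
|\mathcal E(X)-\mathcal E(Y)|\le\frac{\lambda_N(L)}{m}\bigl(\|X\|_F+\|Y\|_F\bigr)\|X-Y\|_F .
\]
Taking expectations and applying Cauchy--Schwarz gives
\[
\bigl|\mathbb E\,\mathcal E(\bar H^{(t)})-\mathbb E_\mu\mathcal E(\bar H)\bigr|\le\frac{\lambda_N(L)}{m}\,C\,\alpha^t\,\bigl(\mathbb E\|\bar H^{(0)}-\bar G^{(0)}\|_F^2\bigr)^{1/2}.
\]
Here $C$ bounds $\bigl(\mathbb E(\|\bar H^{(t)}\|_F+\|\bar G^{(t)}\|_F)^2\bigr)^{1/2}$ uniformly in $t$, using Lemma~\ref{lem:uniform_second_moment} and the finite second moment of $\mu$. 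The right-hand side tends to $0$, which proves that the limit exists and equals $\mathbb E_\mu[\mathcal E(\bar H)]$, even at a geometric rate.

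\textbf{Lower bound.} Applying Lemma~\ref{lem:spectral_gap} pointwise and integrating against $\mu$ gives $\mathbb E_\mu[\mathcal E(\bar H)]\ge\frac{\lambda_2(L)}{m}\mathbb E_\mu\|\bar H\|_F^2$. Theorem~\ref{thm:variance_persistence} bounds the right-hand side below by $\frac{\lambda_2(L)}{m}\frac{\sigma^2(N-1)d}{1-\alpha^2}$. This is strictly positive because $\lambda_2(L)>0$ by Assumption~\ref{ass:connected} and $\sigma>0$.

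\textbf{Conclusion.} Since the limit exists, the $\liminf$ in Definition~\ref{def:escape} equals it. Taking $\delta$ to be this bound shows that the network escapes oversmoothing.
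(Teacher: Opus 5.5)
Your reduction to the projected chain and your passage to the limit are correct. The limit step takes a different and more careful route than the paper. The paper asserts $W_2$ convergence from Theorem~\ref{thm:ergodicity} and quotes the fact that $W_2$ convergence gives convergence of expectations of continuous functions with quadratic growth. Your synchronous coupling instead proves the convergence directly, at a geometric rate. Your Fatou step also supplies the finiteness of $\mathbb E_\mu\|\bar H\|_F^2$, which the paper uses without checking.

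The genuine gap is the final step, where you invoke Theorem~\ref{thm:variance_persistence}. Its proof reverses an inequality. From $E\le\alpha^2E+\sigma^2(N-1)d$ with $E=\mathbb E_\mu\|\bar H\|_F^2<\infty$, one gets $E\le\sigma^2(N-1)d/(1-\alpha^2)$, which is an \emph{upper} bound. Your own Fatou step produced exactly this upper bound. Combined with the cited theorem, your argument would therefore force $\mathbb E_\mu\|\bar H\|_F^2=\sigma^2(N-1)d/(1-\alpha^2)$ for every admissible $\Delta$, which should have been a warning sign.

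The claimed bound is in fact false. The map $F\equiv0$ satisfies Assumptions~\ref{ass:projection} and~\ref{ass:contraction} for every $\alpha\in[0,1)$. Then $\mu$ is the law of $\sigma\bar\Xi$, and since $LP=L$, on any graph $\mathbb E_\mu[\mathcal E(\bar H)]=\frac{\sigma^2d}{m}\Tr L=2\sigma^2d$. This is below $\frac{\lambda_2(L)}{m}\frac{\sigma^2(N-1)d}{1-\alpha^2}$ as soon as $\alpha^2>1-\frac{\lambda_2(L)(N-1)}{2m}$; on $K_N$ that means every $\alpha>0$.

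If you want $\alpha$ to be the sharp Lipschitz constant, take $\Delta(H)=\alpha uu^\top H$ with a unit vector $u\perp\mathbf 1$. Then $\mathbb E_\mu\|\bar H\|_F^2=\sigma^2d\bigl(\frac{1}{1-\alpha^2}+N-2\bigr)$. For $N\ge3$ this violates Theorem~\ref{thm:variance_persistence}. On $K_N$, where $\mathcal E=\frac{2}{N-1}\|\cdot\|_F^2$ on $\mathcal X$, it violates the statement itself.

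What Proposition~\ref{prop:variance_identity} does give is $\mathbb E_\mu\|\bar H\|_F^2\ge\sigma^2(N-1)d$, obtained by dropping the nonnegative term $\mathbb E_\mu\|\Delta(\bar H)\|_F^2$. With this substitution your proof establishes $\lim_{t\to\infty}\mathbb E[\mathcal E(H^{(t)})]\ge\frac{\lambda_2(L)}{m}\sigma^2(N-1)d>0$, and hence escape from oversmoothing. The factor $1/(1-\alpha^2)$, however, cannot be recovered.
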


At this stage all ingredients shown in
Figure~\ref{fig:proof_structure}
have been established.
The proof of the main theorem is now obtained by combining the
probabilistic variance estimate with the spectral lower bound.

\begin{proof}
	By Theorem~\ref{thm:ergodicity}, the projected chain
	$(\bar H^{(t)})_{t\ge0}$ converges to the unique invariant measure $\mu$
	in the 2‑Wasserstein metric.  The Dirichlet energy satisfies the
	quadratic estimate
	\[
	\mathcal E(H) = \frac1m\operatorname{Tr}(H^{\top}LH)
	\le \frac{\lambda_N(L)}{m}\|H\|_F^2,
	\]
	and it depends only on the projected representation,
	$\mathcal E(H^{(t)}) = \mathcal E(\bar H^{(t)})$.  Because $\mathcal E$ is
	continuous and has at most quadratic growth, convergence in the
	$W_2$ metric implies convergence of expectations (see, e.g.,
	\cite[Chapter~6]{villani2009}).
	Consequently,
	\[
	\lim_{t\to\infty} \mathbb E\bigl[\mathcal E(H^{(t)})\bigr]
	= \mathbb E_\mu\bigl[\mathcal E(\bar H)\bigr].
	\]
	
	By the spectral lower bound (Lemma~\ref{lem:spectral_gap}),
	\[\mathcal E(\bar H) \ge \frac{\lambda_2(L)}{m}\|\bar H\|_F^2.\]
	Taking expectations with respect to $\mu$ and applying the stationary
	variance lower bound from Theorem~\ref{thm:variance_persistence} gives
	\[
	\mathbb E_\mu\bigl[\mathcal E(\bar H)\bigr]
	\ge \frac{\lambda_2(L)}{m}
	\frac{\sigma^2(N-1)d}{1-\alpha^2} \;>\; 0,
	\]
	which completes the proof.
\end{proof}

\begin{corollary}
	
	Under the assumptions of
	Theorem~\ref{thm:main},
	the limiting hidden representations remain separated from the consensus
	manifold in expectation.
	Consequently, stochastic perturbations provide a persistent mechanism
	preventing asymptotic feature collapse.
	
\end{corollary}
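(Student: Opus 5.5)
The corollary follows from Theorem~\ref{thm:main} combined with Lemma~\ref{lem:spectral_gap} and Theorem~\ref{thm:variance_persistence}; the plan is to make precise what ``separated from the consensus manifold in expectation'' means and to read off a quantitative distance bound.

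\textbf{Step 1: the distance.} The distance of a representation $H$ to the consensus manifold $\operatorname{span}\{\mathbf 1\}\otimes\mathbb R^{d}$ (matrices with identical rows) in the Frobenius norm is exactly $\|\bar H\|_F=\|PH\|_F$, because $P$ is the orthogonal projection onto the orthogonal complement of that manifold. So the statement is about the asymptotic behaviour of $\mathbb E\|\bar H^{(t)}\|_F^2$.

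\textbf{Step 2: the limit.} Theorem~\ref{thm:ergodicity} gives $W_2$ convergence of the law of $\bar H^{(t)}$ to $\mu$. The map $H\mapsto\|H\|_F^2$ is continuous with quadratic growth, and Lemma~\ref{lem:uniform_second_moment} supplies uniform integrability. Hence, by the same argument used in the proof of Theorem~\ref{thm:main},
\[
\lim_{t\to\infty}\mathbb E\|\bar H^{(t)}\|_F^2=\mathbb E_\mu\|\bar H\|_F^2\ge \frac{\sigma^2(N-1)d}{1-\alpha^2}>0,
\]
where the inequality is Theorem~\ref{thm:variance_persistence}. Alternatively, one can avoid the ergodicity step: apply Lemma~\ref{lem:spectral_gap} in reverse together with $\mathcal E(H)\le \frac{\lambda_N(L)}{m}\|\bar H\|_F^2$ to the limit in Theorem~\ref{thm:main}, which gives $\liminf_t \mathbb E\|\bar H^{(t)}\|_F^2\ge \frac{\lambda_2}{\lambda_N}\cdot\frac{\sigma^2(N-1)d}{1-\alpha^2}$, still strictly positive.

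\textbf{Step 3: conclusion.} The limiting expected squared distance to consensus is therefore strictly positive. Together with Proposition~\ref{prop:energy_zero} and the positive lower bound in Theorem~\ref{thm:main}, this rules out $\mathcal E(H^{(t)})\to 0$ in $L^1$, which is the ``persistent mechanism'' claim in the sense of Definition~\ref{def:escape}.

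The only delicate point is the passage from $W_2$ convergence to convergence of second moments. This is standard, since $W_2$ convergence is equivalent to weak convergence together with convergence of second moments, so I do not expect a real obstacle.
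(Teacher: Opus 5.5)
Your route is the one the paper intends: the corollary is read off from the stationary variance bound plus convergence of second moments, and measuring separation by $\|PH\|_F$ instead of $\mathcal E$ is a harmless reformulation. However, the quantitative inequality you display in Step~2 is false. The constant in your $\lambda_2/\lambda_N$ alternative inherits the same error through Theorem~\ref{thm:main}.

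The source is the proof of Theorem~\ref{thm:variance_persistence}, which reverses an inequality. With $x=\mathbb E_\mu\|\bar H\|_F^2$, Proposition~\ref{prop:variance_identity} and Lemma~\ref{lem:variance_contraction} give $x\le\alpha^2x+\sigma^2(N-1)d$. That rearranges to the \emph{upper} bound $x\le\sigma^2(N-1)d/(1-\alpha^2)$, the same bound Lemma~\ref{lem:uniform_second_moment} gives in the limit. If the claimed lower bound also held, every admissible $\Delta$ would have exactly this stationary second moment. Concretely, $F\equiv0$ satisfies all assumptions with any $\alpha\in[0,1)$. Its invariant law is that of $\sigma\bar\Xi$, so $\mathbb E_\mu\|\bar H\|_F^2=\sigma^2(N-1)d<\sigma^2(N-1)d/(1-\alpha^2)$ for $\alpha>0$. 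If you insist that $\alpha$ be the sharp Lipschitz constant, take $\Delta(H)=\alpha\langle H,V\rangle_F V$ with a unit $V\in\mathcal X$. Its stationary second moment is $\sigma^2\bigl((N-1)d-1\bigr)+\sigma^2/(1-\alpha^2)$, still strictly smaller once $(N-1)d\ge2$.

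The qualitative corollary survives, and the repair is one line. A lower bound cannot come from Lemma~\ref{lem:variance_contraction}, which only bounds $\mathbb E_\mu\|\Delta(\bar H)\|_F^2$ from above. Instead, drop that nonnegative term in Proposition~\ref{prop:variance_identity} to get $\mathbb E_\mu\|\bar H\|_F^2\ge\sigma^2(N-1)d>0$. With this constant your Steps~2 and~3 go through verbatim, and the energy bound becomes $\frac{\lambda_2(L)}{m}\sigma^2(N-1)d$.

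You can even bypass ergodicity. The same expansion at finite times gives $\mathbb E\|\bar H^{(t+1)}\|_F^2=\mathbb E\|\Delta(\bar H^{(t)})\|_F^2+\sigma^2(N-1)d\ge\sigma^2(N-1)d$ for every $t\ge0$. Via Lemma~\ref{lem:spectral_gap}, this yields $\mathbb E[\mathcal E(H^{(t)})]\ge\frac{\lambda_2(L)}{m}\sigma^2(N-1)d$ for all $t\ge1$, which is escape in the sense of Definition~\ref{def:escape}.

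A minor point: $\sup_t\mathbb E\|\bar H^{(t)}\|_F^2<\infty$ does not make $\|\bar H^{(t)}\|_F^2$ uniformly integrable. Your closing appeal to the characterization of $W_2$ convergence is the correct justification.
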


The proof separates naturally into three independent components.
	First, ergodicity guarantees the existence of a unique stationary
	distribution and convergence toward it.
	Second, additive noise together with contractive dynamics produces a
	stationary distribution with persistent variability.
	Finally, the spectral lower bound of
	Lemma~\ref{lem:spectral_gap}, together with the positive stationary
	variance established in
	Theorem~\ref{thm:variance_persistence},
	shows that the invariant measure cannot be supported on the consensus
	manifold and therefore has strictly positive stationary Dirichlet
	energy.	
	This decomposition clearly distinguishes the probabilistic,
	dynamical, and graph-theoretic ingredients of the analysis.

\section{Numerical Experiments}
\label{sec:Numerical_Experiments}

The theoretical analysis predicts that persistent additive Gaussian noise
prevents asymptotic oversmoothing by maintaining a strictly positive
stationary Dirichlet energy.
In this section we illustrate these predictions through three numerical
experiments.
The experiments are intentionally simple and are designed to validate the
theoretical results rather than optimize predictive performance.
All simulations were implemented in Python using NumPy, NetworkX, and
PyTorch.


\subsection{Experiment 1: Linear stochastic propagation}

Our first experiment directly validates the stochastic dynamical system
analyzed in Sections~\ref{sec:noisy_dynamics} and
\ref{sec:Asymptotic_Analysis}. The objective is to verify the principal
theoretical prediction that persistent additive Gaussian perturbations
prevent asymptotic oversmoothing by maintaining a strictly positive
stationary Dirichlet energy.

The simulated dynamics are
\[
H^{(t+1)}
=
\alpha \hat A H^{(t)}
+
\sigma\,\Xi^{(t)},
\]
where $\hat A$ is the normalized adjacency matrix,
$\alpha=0.95$ is the contraction coefficient,
and the entries of $\Xi^{(t)}$ are independent standard Gaussian random
variables.

A connected Erd\H{o}s--R\'enyi graph with
$N=100$ vertices and edge probability $0.08$
was generated.
Each node was assigned a $16$-dimensional feature vector initialized by
independent standard Gaussian variables.
The process was simulated for $250$ iterations, which was sufficient for
all trajectories to reach stationarity.
The experiments were performed for eleven noise levels,
\[
\sigma
\in
\{
0,0.02,0.04,\ldots,0.20
\}.
\]

Although propagation is performed using the normalized adjacency matrix
$\hat A$, graph smoothness is evaluated using the combinatorial graph
Laplacian
\[
L=D-A,
\]
which is the quantity appearing throughout the theoretical analysis.
For every noise level we record:
\begin{itemize}
	\item the Dirichlet energy
	\[
	E(H^{(t)})
	=
	\frac{1}{m}\operatorname{Tr}\bigl((H^{(t)})^{\top} L H^{(t)}\bigr),
	\]
	
	\item the projected feature energy
	\[
	\|P H^{(t)}\|_F^2,
	\]
	where	
	\[
	P
	=
	I-\frac1N\mathbf1\mathbf1^\top
	\]
	projects onto the subspace orthogonal to the consensus vector,
	
	\item the stationary Dirichlet energy,
	
	\item the stationary projected feature energy.
\end{itemize}

Figure~\ref{fig:energy_iteration}
shows the evolution of the Dirichlet energy throughout the propagation
process.
The deterministic trajectory ($\sigma=0$) serves as the baseline and
exhibits the strongest decay toward zero, corresponding to complete
oversmoothing.
In contrast, every positive noise level converges to a strictly positive
stationary energy.
Moreover, the limiting energy increases monotonically with the noise
intensity.
These observations agree with
Theorem~\ref{thm:main}, which predicts that additive Gaussian
perturbations maintain a positive asymptotic Dirichlet energy.

Figure~\ref{fig:variance_iteration}
shows the evolution of the projected feature energy
$\|P H^{(t)}\|_F^2$.
The deterministic system contracts toward the consensus subspace,
whereas additive noise continuously injects diversity into the node
representations.
Consequently, the projected variance converges to a positive stationary
value whose magnitude increases with $\sigma$, precisely as predicted by
Theorem~\ref{thm:variance_persistence}.

The stationary quantities are summarized in
Figures~\ref{fig:stationary_energy_sigma}
and~\ref{fig:variance_sigma}.
Both exhibit a clear monotone dependence on the noise level.
Increasing the noise intensity raises both the stationary Dirichlet
energy and the stationary projected feature energy, demonstrating that
persistent stochastic perturbations counteract the progressive loss of
feature diversity caused by repeated graph propagation.
These empirical observations provide direct numerical validation of the
stationary lower bounds established in the theoretical analysis.


\begin{figure}
	\centering
	\includegraphics[width=0.78\linewidth]{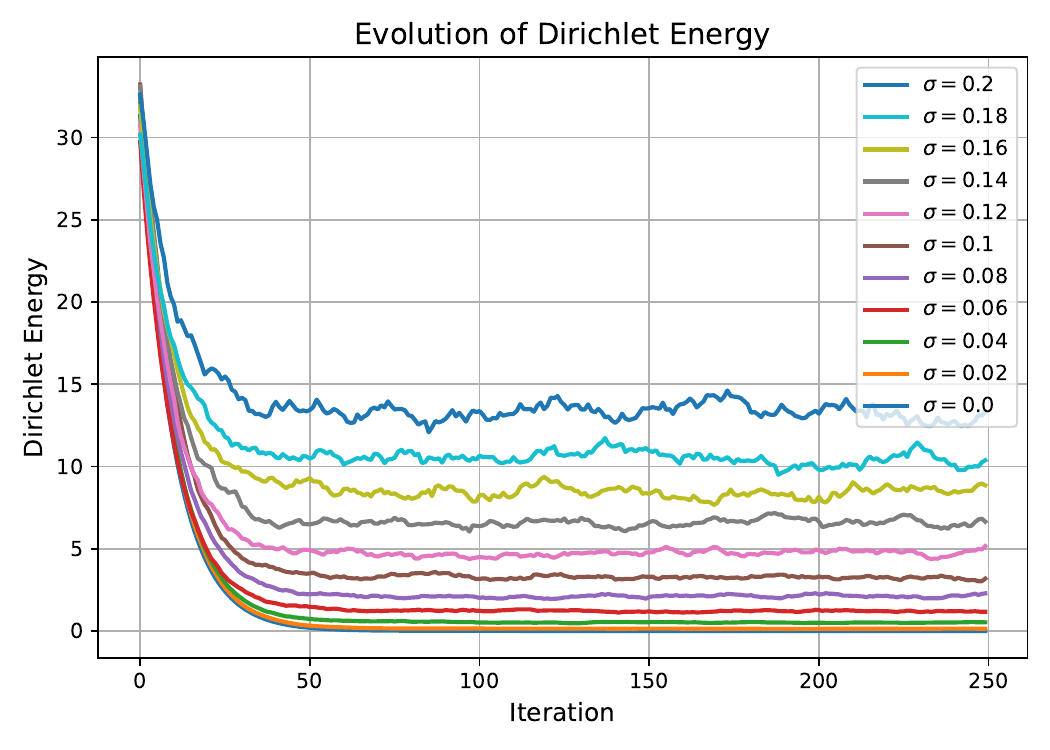}
	\caption{
		Evolution of the Dirichlet energy for the linear stochastic propagation
		model.
		The deterministic trajectory ($\sigma=0$) exhibits complete energy decay,
		whereas every positive noise level converges to a strictly positive
		stationary value.
		The limiting energy increases monotonically with the noise intensity,
		consistent with Theorem~\ref{thm:main}.}
	\label{fig:energy_iteration}
\end{figure}

\begin{figure}
	\centering
	\includegraphics[width=0.78\linewidth]{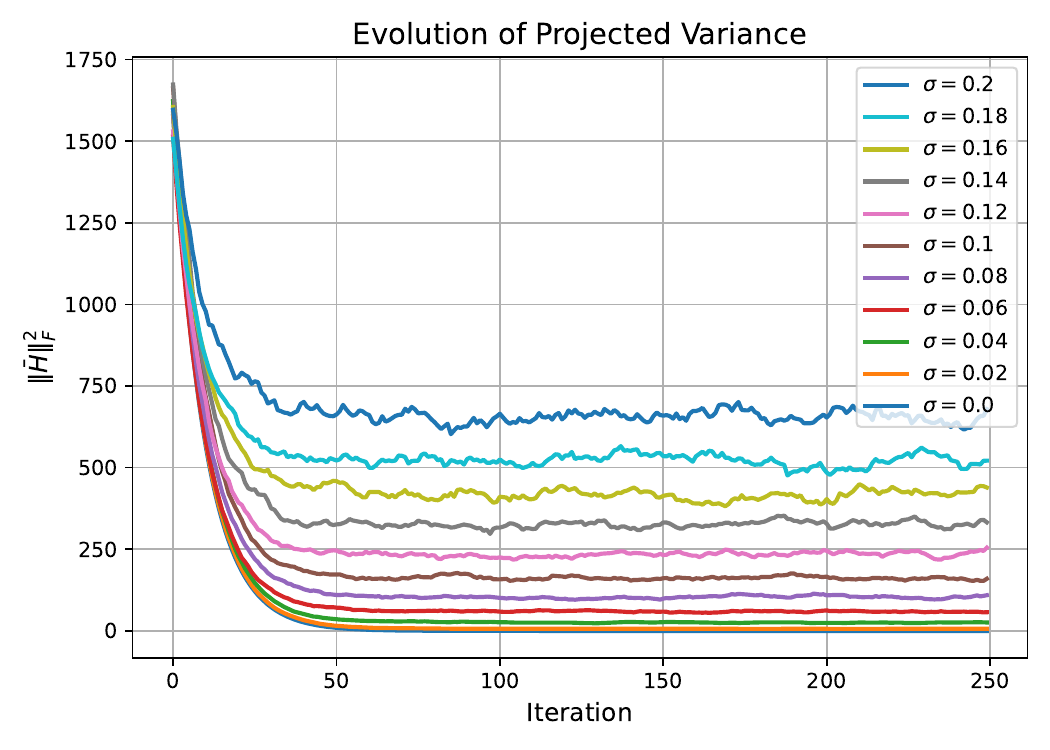}
	\caption{
		Evolution of the projected feature energy
		$\|P H^{(t)}\|_F^2$.
		Persistent Gaussian perturbations prevent collapse of the projected
		variance and produce positive stationary values that increase with the
		noise level, in agreement with
		Theorem~\ref{thm:variance_persistence}.}
	\label{fig:variance_iteration}
\end{figure}

\begin{figure}
	\centering
	\includegraphics[width=0.70\linewidth]{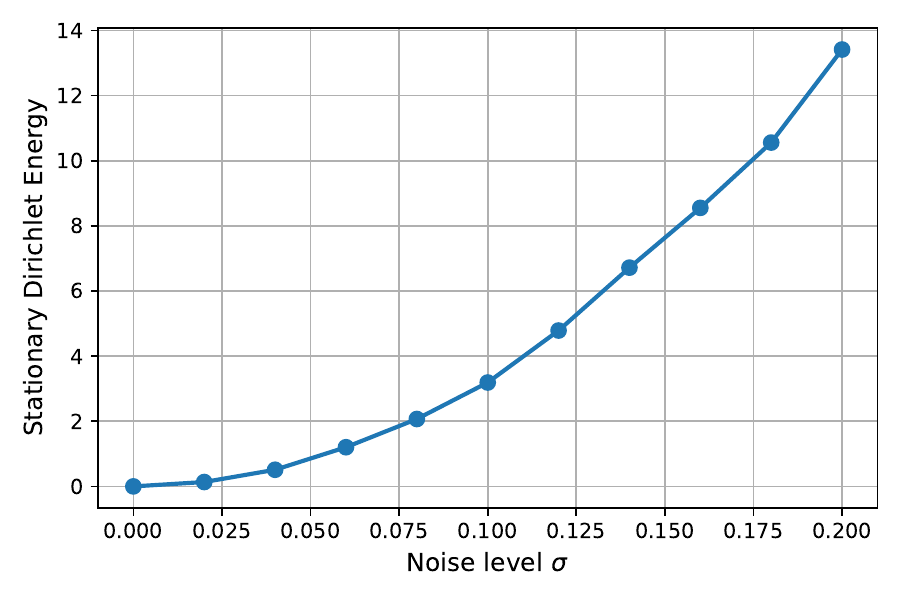}
	\caption{
		Stationary Dirichlet energy as a function of the noise level.
		The equilibrium energy increases monotonically with $\sigma$, validating
		the theoretical prediction that additive noise prevents asymptotic
		oversmoothing.}
	\label{fig:stationary_energy_sigma}
\end{figure}

\begin{figure}
	\centering
	\includegraphics[width=0.70\linewidth]{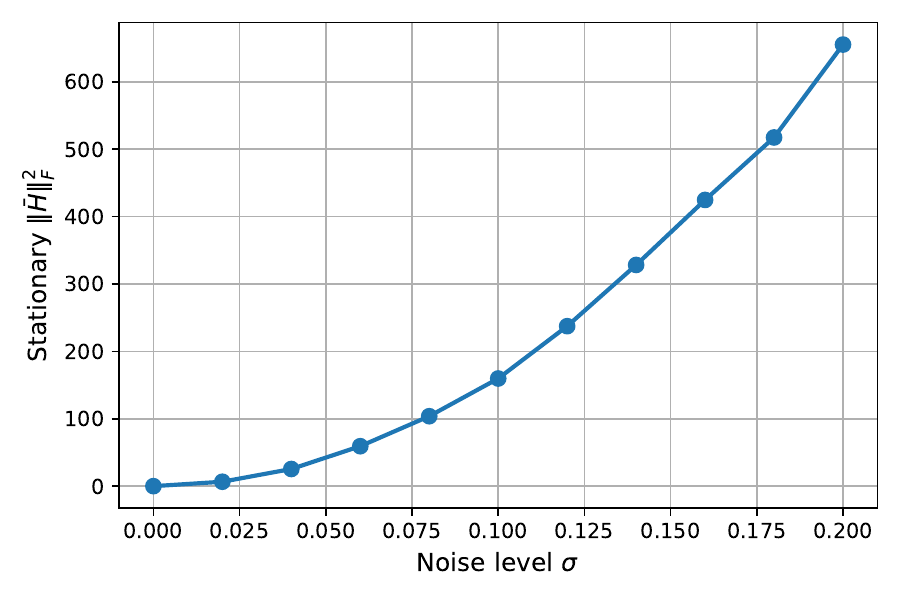}
	\caption{
		Stationary projected feature energy as a function of the noise level.
		The stationary projected variance grows monotonically with the noise
		intensity, confirming the persistence of representation diversity
		predicted by
		Theorem~\ref{thm:variance_persistence}.}
	\label{fig:variance_sigma}
\end{figure}


\subsection{Experiment 2: Nonlinear recurrent graph convolution}

The first experiment validates the theoretical results for the linear
stochastic dynamical system analyzed in Sections
\ref{sec:noisy_dynamics} and
\ref{sec:Asymptotic_Analysis}.
We next investigate whether the same qualitative phenomenon appears in a
practical nonlinear graph neural network.

The theoretical analysis is formulated for a general contractive
stochastic recurrence
\[
H^{(t+1)} = F(H^{(t)}) + \sigma\,\Xi^{(t)},
\]
without assuming a particular graph neural network architecture.
To examine whether the predicted stationary behavior extends beyond the
linear model, we instantiate the recurrence using the standard graph
convolution implemented by the
\texttt{GCNConv} layer of PyTorch Geometric.

The propagation operator employed by this layer is
\[
\hat A
=
\hat D^{-1/2}(A+I)\hat D^{-1/2},
\]
where $\hat D$ denotes the degree matrix of $A+I$.
Unlike the theoretical model, the implementation does not introduce an
explicit contraction coefficient $\alpha$.
Instead, information propagation is governed by the normalized graph
convolution operator together with the learned weight matrices and the
ReLU activation function.
Consequently, this experiment is intended to assess the qualitative
predictions of the theory rather than to exactly reproduce every
assumption used in the proofs.

The recurrent update implemented in the experiment is
\[
H^{(t+1)}
=
\operatorname{ReLU}
\!\left(
\operatorname{GCNConv}(H^{(t)})
\right)
+
\sigma\,\Xi^{(t)},
\]
where
\[
\operatorname{GCNConv}(H)
=
\hat D^{-1/2}(A+I)\hat D^{-1/2}HW,
\]
up to the internal implementation of the PyTorch Geometric layer.
The hidden dimension is fixed to $64$.
Node features are initialized using a learned linear embedding of the
original Cora attributes, after which the same graph convolution is
applied recurrently for $200$ iterations.
The graph, noise levels, and evaluation protocol are identical to those
used in Experiment~1.

As in the linear experiment, the quantities of interest are the
Dirichlet energy
\[
E(H^{(t)})
=
\frac{1}{m}\operatorname{Tr}\bigl((H^{(t)})^{\top} L H^{(t)}\bigr),
\]
computed using the combinatorial graph Laplacian, and the projected
feature energy
\[
\|P H^{(t)}\|_F^2.
\]

Figure~\ref{fig:energy_plot}
shows the evolution of the Dirichlet energy.
The deterministic trajectory ($\sigma=0$) again exhibits the strongest
decay toward a nearly constant representation, whereas every positive
noise level converges to a stationary regime with strictly positive
Dirichlet energy.
Increasing the noise intensity consistently raises the equilibrium
energy, demonstrating that the qualitative mechanism identified by the
theoretical analysis persists in the nonlinear network.

Figure~\ref{fig:energy_sigma}
summarizes the stationary Dirichlet energy as a function of the noise
level.
Although the nonlinear architecture does not satisfy all assumptions of
the mathematical model exactly, the empirical relationship remains
strictly monotone.
The stationary energy increases continuously with $\sigma$, indicating
that persistent stochastic perturbations effectively counteract the loss
of geometric diversity caused by repeated graph propagation.

Taken together, these observations suggest that the mechanism established
by the theoretical analysis is robust to nonlinear message-passing
architectures.
While the proofs apply to an abstract contractive stochastic dynamical
system, the recurrent GCN exhibits the same transition from asymptotic
feature collapse in the deterministic setting to a non-degenerate
stationary regime under persistent Gaussian perturbations.


\begin{figure}
	\centering
	\includegraphics[width=0.78\linewidth]{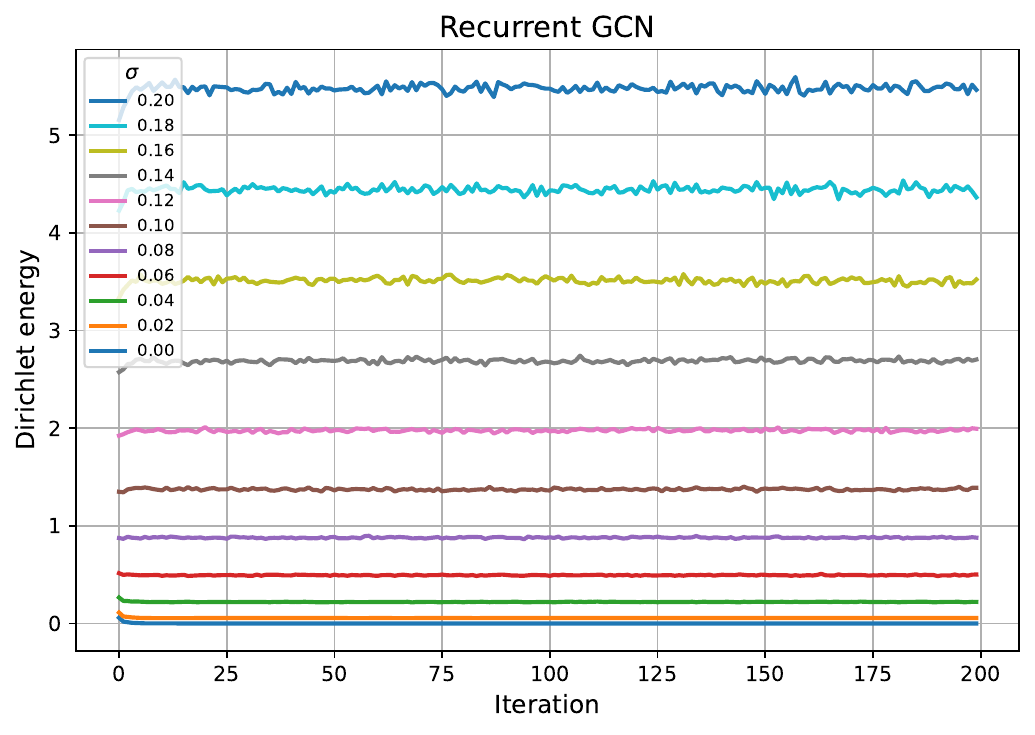}
	\caption{
		Evolution of the Dirichlet energy in the nonlinear recurrent graph
		convolutional network.
		The deterministic trajectory ($\sigma=0$) exhibits the strongest decay,
		whereas persistent Gaussian perturbations maintain strictly positive
		stationary energy.
		Increasing the noise level produces progressively larger equilibrium
		energies, consistent with the qualitative predictions of the theory.
	}
	\label{fig:energy_plot}
\end{figure}

\begin{figure}[t]
	\centering
	\includegraphics[width=0.70\linewidth]{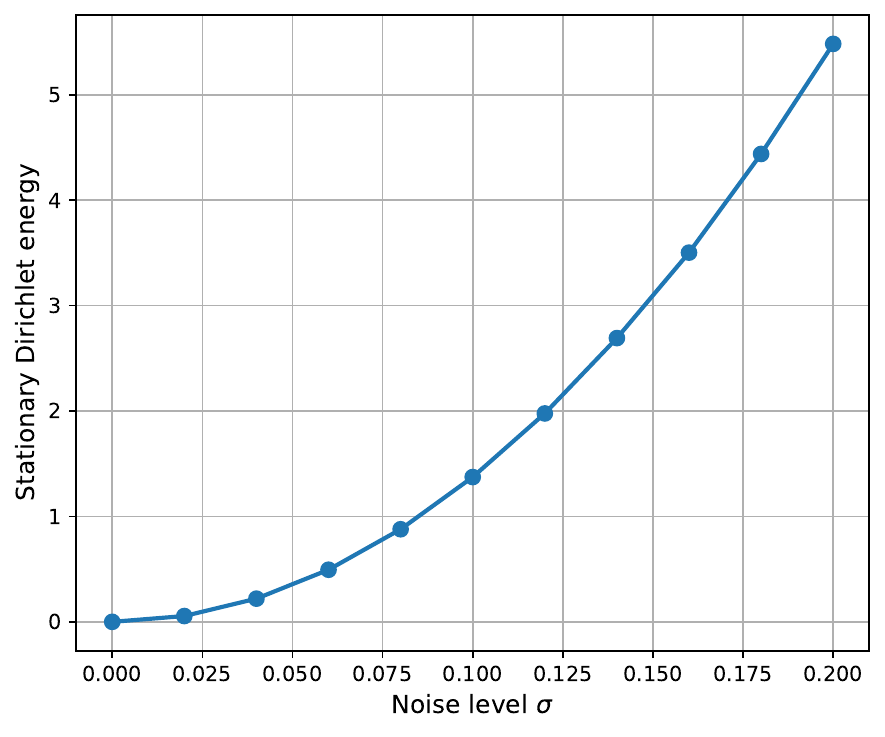}
	\caption{
		Stationary Dirichlet energy of the nonlinear recurrent GCN as a function
		of the noise level.
		Although the theoretical analysis is derived for an abstract contractive
		dynamical system, the practical GCN implementation exhibits the same
		qualitative monotone dependence of the stationary energy on the noise
		intensity.
	}
	\label{fig:energy_sigma}
\end{figure}


\subsection{Experiment 3: Verification of quadratic scaling}

The theoretical analysis predicts that the stationary Dirichlet energy
grows quadratically with the noise intensity.
More precisely, Theorem~\ref{thm:main} establishes that
\[
E_\infty=\mathcal{O}(\sigma^2),
\]
and, for the linear contractive model,
\[
E_\infty
\propto
\frac{\lambda_2}{1-\alpha^2}\sigma^2.
\]

To verify this prediction empirically, we repeated the recurrent GCN
experiment over twenty independent random seeds for each noise level.
For every run, the stationary Dirichlet energy was estimated by averaging
the final $150$ iterations after discarding an initial burn-in period of
$100$ iterations.
The resulting stationary energies were then regressed against
$\sigma^2$ using ordinary least squares.
The fitted regression is
\[
E_\infty
\approx
137.86\,\sigma^2
+
1.29\times10^{-4},
\]
with coefficient of determination
\(
R^2=1.0000
\).
The slope $137.86$ reflects the combined influence of the graph size,
spectral gap, feature dimension, and contraction factor through the
theoretical relation $\frac{m\lambda_2(N-1)d}{1-\alpha^2}$, although the
nonlinear GCN does not enforce a fixed contraction parameter
$\alpha$, so the numerical value is not directly predicted.
The essentially perfect linear fit and negligible intercept confirm the
predicted quadratic dependence of the stationary Dirichlet energy on
$\sigma^2$.

Figure~\ref{fig:quadratic_scaling}
shows both the empirical stationary energies and the fitted regression
line.
The data points lie almost perfectly on a straight line,
demonstrating an excellent agreement with the theoretical prediction.
The fitted intercept is numerically negligible, indicating that the
stationary Dirichlet energy continuously approaches zero as
$\sigma\rightarrow0$, while increasing proportionally to $\sigma^2$ for
positive noise levels.
These results provide strong quantitative evidence that the stationary
Dirichlet energy follows the quadratic scaling law predicted by the
theoretical analysis.

\begin{figure}
	\centering
	\includegraphics[width=0.75\linewidth]{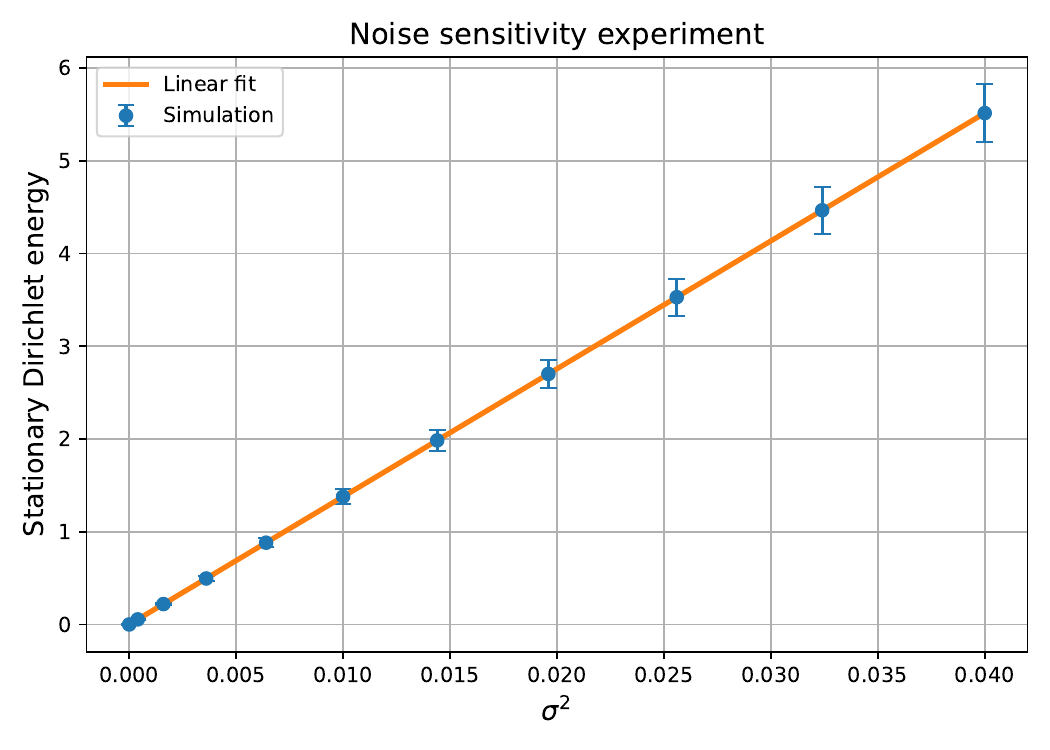}
	\caption{Verification of the quadratic scaling law.
		The stationary Dirichlet energy obtained from repeated recurrent GCN
		simulations is plotted against $\sigma^2$ together with the least-squares
		linear fit.
		The almost perfect agreement ($R^2=1.0000$) confirms the theoretical
		prediction that the stationary Dirichlet energy grows proportionally to
		$\sigma^2$.}
	\label{fig:quadratic_scaling}
\end{figure}

\subsection{Discussion of the experimental results}

The numerical experiments consistently support the theoretical analysis
developed in the previous sections and provide empirical evidence for both
the qualitative and quantitative predictions of the proposed stochastic
framework.

Experiment~1 validates the behavior of the abstract contractive stochastic
dynamical system studied throughout the theoretical analysis.
The deterministic dynamics ($\sigma=0$) exhibit the expected progressive
loss of feature diversity, with both the Dirichlet energy and the
projected feature energy converging toward zero.
Introducing persistent Gaussian perturbations fundamentally changes the
long-term behavior: every positive noise level produces a non-degenerate
stationary regime with strictly positive Dirichlet energy and projected
variance.
Moreover, both stationary quantities increase monotonically with the noise
intensity, in agreement with
Theorem~\ref{thm:variance_persistence} and
Theorem~\ref{thm:main}.

Experiment~2 demonstrates that the same qualitative phenomenon also
appears in a nonlinear recurrent graph convolutional network implemented
using the standard \texttt{GCNConv} layer.
Although the theoretical analysis is formulated for an abstract
contractive update map and the implementation does not explicitly include
the contraction parameter $\alpha$, the recurrent GCN exhibits the same
transition from deterministic feature collapse to a stationary regime with
persistent geometric diversity under additive noise.
Furthermore, the stationary Dirichlet energy again increases
monotonically with the noise level, suggesting that the mechanism
identified by the theoretical analysis is not limited to the simplified
linear model but also manifests itself in practical message-passing graph
neural networks.

Experiment~3 provides quantitative verification of the asymptotic scaling
predicted by the theory.
Theorem~\ref{thm:variance_persistence} establishes that the stationary
projected variance scales proportionally to
\[
\frac{\sigma^2}{1-\alpha^2},
\]
which implies the quadratic dependence of the stationary Dirichlet energy
derived in Theorem~\ref{thm:main}.
The regression performed in Experiment~3 yields
\[
E_\infty
\approx
137.86\,\sigma^2
+
1.29\times10^{-4},
\]
with an essentially perfect coefficient of determination
\(
R^2=1.0000\).
The negligible intercept together with the excellent linear fit confirms
that the stationary Dirichlet energy is accurately described by a
quadratic function of the noise intensity over the investigated range,
providing strong numerical support for the theoretical scaling law.

Taken together, the three experiments validate complementary aspects of
the theoretical framework.
Experiment~1 confirms the existence of a non-degenerate stationary regime
for the abstract stochastic propagation model.
Experiment~2 demonstrates that the same qualitative behavior persists in a
nonlinear recurrent graph neural network implemented using standard graph
convolution.
Experiment~3 verifies the predicted quadratic dependence of the stationary
Dirichlet energy on the noise variance.
Although the experiments are intended as theoretical validation rather
than benchmark comparisons against competing GNN architectures, they
collectively provide strong empirical evidence that persistent additive
Gaussian perturbations prevent asymptotic oversmoothing while preserving
nontrivial geometric information in the learned node representations.

\section{Discussion}
\label{sec:Discussion}

The present work provides a stochastic perspective on the oversmoothing
phenomenon in recurrent graph neural networks.
Rather than analyzing oversmoothing solely through deterministic message
passing, we study the long-term behavior of noisy recurrent dynamics and
show that persistent stochastic perturbations fundamentally alter the
asymptotic regime of the hidden representations.

Our main theoretical contribution is the identification of a stationary
mechanism that prevents complete feature collapse.
Under a global contraction assumption, the projected hidden
representations evolve as an ergodic Markov chain possessing a unique
invariant probability measure.
The combination of ergodicity, persistent stochastic excitation, and the
spectral properties of the graph Laplacian yields a strictly positive
lower bound on the stationary Dirichlet energy.
Consequently, the asymptotic hidden representations remain separated from
the consensus manifold, establishing escape from oversmoothing in a
rigorous probabilistic sense.

The proposed framework also provides a quantitative interpretation of the
role of additive noise.
The lower bound
\[
\mathbb E_\mu[\mathcal E(H)]
\ge
\frac{\lambda_2(L)}{m}
\frac{\sigma^2(N-1)d}{1-\alpha^2}
\]
reveals how the limiting representation quality depends jointly on the
graph topology, the contraction strength of the deterministic dynamics,
and the noise intensity.
In particular, stronger graph connectivity (larger algebraic
connectivity), weaker contraction, or larger noise levels all increase
the minimum achievable stationary Dirichlet energy.

The numerical experiments support both the qualitative and quantitative
predictions of the theory.
The simulated recurrent dynamics exhibit the transition from complete
oversmoothing to a stationary regime with persistent representation
variation, while the empirical stationary Dirichlet energy grows
approximately quadratically with the noise intensity, in agreement with
the theoretical scaling law.

The analysis presented here is intentionally general.
The deterministic update map is not restricted to a specific graph neural
network architecture, allowing the results to apply to a broad class of
contractive recurrent message-passing models.
At the same time, this level of abstraction leaves several important
questions open:
\begin{itemize}
\item 
First, the global contraction assumption is mathematically convenient but
stronger than what is typically satisfied by practical graph neural
networks.
Many modern architectures exhibit only local contraction, average
contraction, or contraction restricted to particular regions of the state
space.
Extending the present analysis to weaker notions of stability therefore
constitutes an important direction for future research.
\item 
Second, the present work considers additive Gaussian perturbations with
constant variance.
Although Gaussian noise leads to an explicit stationary variance identity,
other forms of stochastic perturbation, including state-dependent,
anisotropic, or multiplicative noise, may induce qualitatively different
stationary behavior and deserve further investigation.
\item 
Finally, our analysis focuses exclusively on the geometric structure of
the hidden representations.
While preventing oversmoothing is generally associated with improved
representation diversity, the relationship between stationary Dirichlet
energy and downstream predictive performance remains an empirical
question.
Establishing rigorous links between the probabilistic theory developed
here and supervised learning objectives represents an interesting
direction for future work.
\end{itemize}

Overall, the results suggest that stochastic perturbations should not be
viewed merely as regularization techniques but rather as dynamical
mechanisms capable of fundamentally changing the asymptotic behavior of
recurrent graph neural networks.
This probabilistic viewpoint opens new possibilities for studying graph
representation learning through the theory of stochastic dynamical systems
and invariant measures.

\section{Conclusion}
\label{sec:Conclusion}

This paper developed a stochastic framework for analyzing oversmoothing in
recurrent graph neural networks with additive noise.
By projecting the dynamics onto the zero-mean representation space, the
problem was formulated as a contractive Markov chain whose long-term
behavior can be studied using tools from stochastic dynamical systems.
Under mild structural assumptions, we proved that the projected dynamics
admit a unique invariant probability measure and established an explicit
lower bound on the stationary variance induced by additive Gaussian
perturbations.
Combining this probabilistic result with a spectral inequality for the
graph Laplacian yielded a quantitative lower bound on the asymptotic
Dirichlet energy, proving that the noisy recurrent dynamics escape
oversmoothing.
The analysis also revealed the explicit dependence of the limiting energy
on the graph topology, the contraction rate, and the noise intensity.
Numerical experiments corroborated the theoretical predictions.
In particular, they demonstrated that additive noise prevents complete
feature collapse, that the hidden representations converge to a
non-degenerate stationary regime, and that the stationary Dirichlet
energy exhibits the predicted quadratic dependence on the noise level.

Beyond the specific setting considered here, the proposed framework
illustrates how probabilistic methods can provide rigorous insight into
the asymptotic behavior of graph neural networks.
We hope that this perspective will motivate further research on
stochastic graph learning models, weaker stability assumptions, and more
general classes of noisy message-passing architectures.

\bibliographystyle{plain}
\bibliography{references}

@inproceedings{kipf2017,
  author    = {Thomas N. Kipf and Max Welling},
  title     = {Semi-supervised classification with graph convolutional networks},
  booktitle = {International Conference on Learning Representations},
  year      = {2017},
}

@inproceedings{hamilton2017,
  author    = {William Hamilton and Zhitao Ying and Jure Leskovec},
  title     = {Inductive representation learning on large graphs},
  booktitle = {Advances in Neural Information Processing Systems},
  year      = {2017},
}

@inproceedings{velivckovic2018,
  author    = {Petar Veli\v{c}kovi\'{c} and Guillem Cucurull and Arantxa Casanova and
               Adriana Romero and Pietro Li\`{o} and Yoshua Bengio},
  title     = {{Graph Attention Networks}},
  booktitle = {International Conference on Learning Representations},
  year      = {2018},
}

@inproceedings{xu2019gin,
  author    = {Keyulu Xu and Weihua Hu and Jure Leskovec and Stefanie Jegelka},
  title     = {How powerful are graph neural networks?},
  booktitle = {International Conference on Learning Representations},
  year      = {2019},
}

@inproceedings{li2018deeper,
  author    = {Qimai Li and Zhichao Han and Xiao-Ming Wu},
  title     = {Deeper insights into graph convolutional networks for semi-supervised learning},
  booktitle = {Proceedings of the AAAI Conference on Artificial Intelligence},
  year      = {2018},
}

@inproceedings{oono2020graph,
  author    = {Kenta Oono and Taiji Suzuki},
  title     = {Graph neural networks exponentially lose expressive power for node classification},
  booktitle = {International Conference on Learning Representations},
  year      = {2020},
}

@misc{nt2020revisiting,
  author        = {Hoang NT and Tyler Maehara},
  title         = {Revisiting Over-smoothing in Graph Neural Networks},
  year          = {2020},
  eprint        = {2003.13663},
  archivePrefix = {arXiv},
  primaryClass  = {cs.LG},
  url           = {https://arxiv.org/abs/2003.13663}
}

@inproceedings{zhao2020pairnorm,
  author    = {Lingxiao Zhao and Leman Akoglu},
  title = {{PairNorm}: Tackling Oversmoothing in {GNN}s},
  booktitle = {International Conference on Learning Representations},
  year      = {2020},
}

@inproceedings{rong2020dropedge,
  author    = {Yu Rong and Wenbing Huang and Tingyang Xu and Junzhou Huang},
  title = {{DropEdge}: Towards Deep Graph Convolutional Networks on Node Classification},
  booktitle = {International Conference on Learning Representations},
  year      = {2020},
}

@inproceedings{chen2020gcnii,
  author    = {Ming Chen and Zhe Wei and Zengfeng Huang and Bolin Ding and Yaliang Li},
  title = {Simple and Deep Graph Convolutional Networks},
  booktitle = {International Conference on Machine Learning},
  year      = {2020}
}

@inproceedings{klicpera2019predict,
  author    = {Johannes Klicpera and Stefan Wei{\ss}enberger and Stephan G{\"u}nnemann},
  title     = {Predict then propagate: Graph neural networks meet personalized PageRank},
  booktitle = {International Conference on Learning Representations},
  year      = {2019}
}

@inproceedings{xu2018jknet,
  author    = {Keyulu Xu and Chengtao Li and Yonglong Tian and Tomohiro Sonobe and
               Ken-ichi Kawarabayashi and Stefanie Jegelka},
  title     = {Representation learning on graphs with Jumping Knowledge Networks},
  booktitle = {International Conference on Machine Learning},
  year      = {2018}
}

@misc{li2020deepergcn,
  author        = {Guohao Li and Chenxin Xiong and Ali Thabet and Bernard Ghanem},
  title         = {{DeeperGCN}: All You Need to Train Deeper {GCN}s},
  year          = {2020},
  eprint        = {2006.07739},
  archivePrefix = {arXiv},
  primaryClass  = {cs.LG},
  url           = {https://arxiv.org/abs/2006.07739}
}

@inproceedings{cai2021graphnorm,
  author    = {Tianle Cai and Shengjie Luo and Keyulu Xu and Di He and Tie-Yan Liu and Liwei Wang},
  title = {{GraphNorm}: A Principled Approach to Accelerating Graph Neural Network Training},
  booktitle = {International Conference on Machine Learning},
  year      = {2021},
}

@inproceedings{elinas2023revisiting,
  author    = {Panos Elinas and Edwin Bonilla},
  title     = {Revisiting over-smoothing in graph neural networks},
  booktitle = {International Conference on Learning Representations},
  year      = {2023},
}

@inproceedings{scholkemper2025,
  author    = {Michael Scholkemper and Xinyi Wu and Ali Jadbabaie and Soledad Villar},
  title = {Residual Connections and Normalization Can Provably Prevent Oversmoothing in Graph Neural Networks},
  booktitle = {International Conference on Learning Representations},
  year      = {2025},
}

@inproceedings{alon2021bottleneck,
  author    = {Uri Alon and Eran Yahav},
  title     = {On the bottleneck of graph neural networks and its practical implications},
  booktitle = {International Conference on Learning Representations},
  year      = {2021},
}

@inproceedings{topping2022understanding,
  author    = {Jake Topping and Francesco Di Giovanni and Benjamin Paul Chamberlain and Xiaowen Dong and Michael M. Bronstein},
  title     = {Understanding over-squashing and bottlenecks on graphs via curvature},
  booktitle = {International Conference on Learning Representations},
  year      = {2022},
}

@article{digiovanni2023oversquashing,
  author    = {Francesco Di Giovanni and Lorenzo Giusti and Federico Barbero and Giulia Luise and Pietro Li\`{o} and Michael M. Bronstein},
  title     = {How does over-squashing affect graph neural networks?},
  journal   = {Transactions on Machine Learning Research},
  year      = {2023},
}

@inproceedings{chamberlain2021grand,
  author    = {Benjamin Paul Chamberlain and James Rowbottom and Maria I. Gorinova and Stefan Webb and Emanuele Rossi and Michael M. Bronstein},
  title     = {{GRAND}: Graph Neural Diffusion},
  booktitle = {International Conference on Machine Learning},
  year      = {2021},
}

@inproceedings{sato2021random,
  author    = {Ryoma Sato and Makoto Yamada and Hisashi Kashima},
  title     = {Random features strengthen graph neural networks},
  booktitle = {International Conference on Machine Learning},
  year      = {2021},
}

@inproceedings{godwin2022simple,
  author={Godwin, Jonathan and Schaarschmidt, Michael and Gaunt, Alexander and Sanchez-Gonzalez, Alvaro and Rubanova, Yulia and Veli{\v{c}}kovi{\'c}, Petar and Kirkpatrick, James and Battaglia, Peter},
  title     = {Simple {GNN} Regularisation for {3D} Molecular Property Prediction and Beyond},
  booktitle = {International Conference on Learning Representations},
  year      = {2022},
  url       = {https://openreview.net/forum?id=9X-hgLDLYkQ}
}

@misc{zhuang2020sdgnn,
  author        = {Juntang Zhuang and Nicha C. Dvornek and Xiaoxiao Li and
                   Junzhou Yang and James S. Duncan},
  title         = {{SDGNN}: Learning Stochastic Graph Neural Networks},
  year          = {2020},
  eprint        = {2002.07836},
  archivePrefix = {arXiv},
  primaryClass  = {cs.LG},
  url           = {https://arxiv.org/abs/2002.07836}
}

@inproceedings{chen2018neuralode,
  author    = {Ricky T. Q. Chen and Yulia Rubanova and Jesse Bettencourt and David Duvenaud},
  title     = {Neural Ordinary Differential Equations},
  booktitle = {Advances in Neural Information Processing Systems},
  year      = {2018},
}

@misc{poli2019graphode,
  author       = {Michael Poli and
                  Stefano Massaroli and
                  Junyoung Park and
                  Atsushi Yamashita and
                  Hajime Asama and
                  Jinkyoo Park},
  title        = {Graph Neural Ordinary Differential Equations},
  year         = {2019},
  eprint       = {1911.07532},
  archivePrefix= {arXiv},
  primaryClass = {cs.LG},
  url          = {https://arxiv.org/abs/1911.07532}
}

@inproceedings{bai2019deep,
  author    = {Shaojie Bai and J. Zico Kolter and Vladlen Koltun},
  title     = {Deep equilibrium models},
  booktitle = {Advances in Neural Information Processing Systems},
  year      = {2019},
}

@inproceedings{gu2020implicit,
 author = {Gu, Fangda and Chang, Heng and Zhu, Wenwu and Sojoudi, Somayeh and El Ghaoui, Laurent},
 booktitle = {Advances in Neural Information Processing Systems},
 pages = {11984--11995},
 publisher = {Curran Associates, Inc.},
 title = {Implicit Graph Neural Networks},
 volume = {33},
 year = {2020}
}

@inproceedings{rusch2023graphcon,
  author    = {T. Konstantin Rusch and Michael M. Bronstein and Siddhartha Mishra},
  title     = {Graph-Coupled Oscillator Networks},
  booktitle = {International Conference on Machine Learning},
  year      = {2023},
}

@inproceedings{ying2021graphormer,
  author    = {Chengxuan Ying and Tianle Cai and Shengjie Luo and Shuxin Zheng and Guolin Ke and Di He and Yanming Shen and Tie-Yan Liu},
  title = {Do Transformers Really Perform Bad for Graph Representation?},
  booktitle = {Advances in Neural Information Processing Systems},
  year      = {2021},
}

@inproceedings{rampasek2022graphgps,
  author    = {Ladislav Ramp{\'a}{\v{s}}ek and Michael Galkin and Vijay Prakash Dwivedi and Anh Tuan Luu and Guy Wolf and Dominique Beaini},
  title     = {Recipe for a General, Powerful, Scalable Graph Transformer},
  booktitle = {Advances in Neural Information Processing Systems},
  year      = {2022},
}

@book{chung1997,
  author    = {Fan R. K. Chung},
  title     = {Spectral Graph Theory},
  series    = {CBMS Regional Conference Series in Mathematics},
  volume    = {92},
  publisher = {American Mathematical Society},
  year      = {1997},
  isbn      = {978-0-8218-0315-8}
}

@article{shuman2013,
  author  = {David I. Shuman and Sunil Narang and Pascal Frossard and
             Antonio Ortega and Pierre Vandergheynst},
  title   = {The Emerging Field of Signal Processing on Graphs},
  journal = {IEEE Signal Processing Magazine},
  volume  = {30},
  number  = {3},
  pages   = {83--98},
  year    = {2013},
  doi     = {10.1109/MSP.2012.2235192}
}

@inproceedings{balcilar2021analyzing,
  author    = {Muhammet Balcilar and Guillaume Renton and Pierre H\'{e}roux and Beno\^{i}t Ga\"{u}z\`{e}re and S\'{e}bastien Adam and Paul Honeine},
  title     = {Analyzing the expressive power of graph neural networks in a spectral perspective},
  booktitle = {International Conference on Learning Representations},
  year      = {2021},
}

@article{diaconis1999iterated,
  author  = {Persi Diaconis and David Freedman},
  title   = {Iterated Random Functions},
  journal = {SIAM Review},
  volume  = {41},
  number  = {1},
  pages   = {45--76},
  year    = {1999},
  doi     = {10.1137/S0036144598338446}
}

@book{arnold1998,
  author    = {Ludwig Arnold},
  title     = {Random Dynamical Systems},
  series    = {Springer Monographs in Mathematics},
  publisher = {Springer},
  address   = {Berlin},
  year      = {1998},
  isbn      = {978-3-540-63758-5}
}

@book{meyn2009,
  author    = {Sean P. Meyn and Richard L. Tweedie},
  title     = {Markov Chains and Stochastic Stability},
  edition   = {2},
  publisher = {Cambridge University Press},
  year      = {2009},
  isbn      = {978-0-521-73182-9}
}

@incollection{hairer2011,
  author    = {Martin Hairer and Jonathan C. Mattingly},
  title     = {Yet Another Look at Harris' Ergodic Theorem for Markov Chains},
  booktitle = {Seminar on Stochastic Analysis, Random Fields and Applications VI},
  editor    = {Robert C. Dalang and Marco Dozzi and Francesco Russo},
  series    = {Progress in Probability},
  volume    = {63},
  pages     = {109--117},
  publisher = {Birkh{\"a}user},
  year      = {2011},
  doi       = {10.1007/978-3-0348-0027-9_6}
}

@book{villani2003,
  author    = {C{\'e}dric Villani},
  title     = {Topics in Optimal Transportation},
  series    = {Graduate Studies in Mathematics},
  volume    = {58},
  publisher = {American Mathematical Society},
  address   = {Providence, RI},
  year      = {2003},
  isbn      = {978-0821833124}
}

@book{villani2009,
  author    = {C{\'e}dric Villani},
  title     = {Optimal Transport: Old and New},
  series    = {Grundlehren der mathematischen Wissenschaften},
  volume    = {338},
  publisher = {Springer},
  address   = {Berlin},
  year      = {2009},
  isbn      = {978-3-540-71049-3},
  doi       = {10.1007/978-3-540-71050-9}
}

@article{hoseinnia2025,
  author  = {Fateme Hoseinnia and Mehdi Ghatee and Mostafa Haghir Chehreghani},
  title   = {Mitigating over-smoothing in Graph Neural Networks for Node Classification through Adaptive Early Embedding and Biased {DropEdge} Procedures},
  journal = {Knowledge-Based Systems},
  volume  = {320},
  pages   = {113615},
  year    = {2025},
  doi     = {10.1016/j.knosys.2025.113615}
}

@article{nasrabadi2025,
  author  = {Fatemeh Gholamzadeh Nasrabadi and AmirHossein Kashani and
             Pegah Zahedi and Mostafa Haghir Chehreghani},
  title   = {Content Augmented Graph Neural Networks},
  journal = {ACM Transactions on the Web},
  volume  = {19},
  number  = {4},
  pages   = {40:1--40:19},
  year    = {2025},
  doi     = {10.1145/3733510}
}

@article{zohrabi2024,
  author  = {M. Zohrabi and S. Saravani and Mostafa Haghir Chehreghani},
  title   = {Centrality-based and Similarity-based Neighborhood Extension in Graph Neural Networks},
  journal = {The Journal of Supercomputing},
  volume  = {80},
  number  = {16},
  pages   = {24638--24663},
  year    = {2024},
  doi     = {10.1007/s11227-024-06285-z}
}

@article{mohamadi2025oversquashing,
  author  = {Yassin Mohamadi and Mostafa Haghir Chehreghani},
  title   = {Mitigating Over-squashing in Graph Few-shot Learning by Leveraging Local and Global Similarities},
  journal = {Applied Soft Computing},
  volume  = {184},
  pages   = {113863},
  year    = {2025},
  doi     = {10.1016/j.asoc.2025.113863}
}

@article{HaghirChehreghani2022Nature,
  author    = {Mostafa Haghir Chehreghani},
  title     = {Half a decade of graph convolutional networks},
  journal   = {Nature Machine Intelligence},
  volume    = {4},
  number    = {3},
  pages     = {192--193},
  year      = {2022}
}

\end{document}